\newcommand{\red}[1]{{\color{red}#1}}
\definecolor{Gray}{gray}{0.9}
\definecolor{mygreen}{rgb}{0.0, 0.5, 0.0}
\definecolor{myred}{rgb}{0.8, 0.25, 0.33}
\definecolor{myblue}{rgb}{0.19, 0.55, 0.91}
\definecolor{uclablue}{rgb}{0.15, 0.45, 0.68}
\definecolor{boxgreen}{rgb}{0.02, 0.66, 0.02}
\definecolor{boxred}{rgb}{0.66, 0.1, 0.1}
\definecolor{boxblue}{rgb}{0.01, 0.01, 0.73}
\definecolor{mygray}{gray}{0.4}
\newcolumntype{Y}{>{\arraybackslash}X}
\renewcommand{\paragraph}[1]{\noindent\textbf{#1.}}
\renewcommand{\paragraph}[1]{\noindent\textbf{#1.}}
\DeclareRobustCommand\onedot{\futurelet\@let@token\@onedot}
\def\@onedot{\ifx\@let@token.\else.\null\fi\xspace}
\def\ie{\emph{i.e}\onedot}
\def\etc{\emph{etc}\onedot}
\acrodef{llms}[LLMs]{Large Language Models}
\acrodef{mlms}[MLMs]{Multimodal Language Models}
\renewcommand{\red}{\textcolor{Red}}
\newcommand{\green}{\textcolor{Green}}
\definecolor{iccvblue}{rgb}{0.21,0.49,0.74}
\title{Open-World Skill Discovery from Unsegmented Demonstration Videos}
\author{
Jingwen Deng$^{1}$\thanks{Equal Contribution.\quad$^{\dagger}$Corresponding Author.}
\and 
Zihao Wang$^{1}$\footnotemark[1]
\and 
Shaofei Cai$^1$ 
\and
Anji Liu$^2$ 
\and
Yitao Liang$^{1\dagger}$
\and
\small{$^1$Peking University~\,
$^2$University~of~California,~Los Angeles~\,}
\\
\tt{\small{\{dengjingwen,zhwang,caishaofei\}@stu.pku.edu.cn, liuanji@cs.ucla.edu, yitaol@pku.edu.cn}}
}
\begin{document}
\maketitle
\renewcommand{\thefootnote}{}
\footnotetext{\textit{Preprint version of the paper accepted at ICCV 2025.}}
\renewcommand{\thefootnote}{\arabic{footnote}}
\begin{abstract}
    Learning skills in open-world environments is essential for developing agents capable of handling a variety of tasks by combining basic skills.
Online demonstration videos are typically long but unsegmented, making them difficult to segment and label with skill identifiers.
Unlike existing methods that rely on random splitting or human labeling, we have developed a self-supervised learning-based approach to segment these long videos into a series of semantic-aware and skill-consistent segments.
Drawing inspiration from human cognitive event segmentation theory, we introduce \textbf{Skill Boundary Detection} (SBD), an annotation-free temporal video segmentation algorithm. SBD detects skill boundaries in a video by leveraging prediction errors from a pretrained unconditional action-prediction model. This approach is based on the assumption that a significant increase in prediction error indicates a shift in the skill being executed. 
We evaluated our method in Minecraft, a rich open-world simulator with extensive gameplay videos available online. The SBD-generated segments yielded relative performance improvements of 63.7\% and 52.1\% for conditioned policies on short-term atomic tasks, and 11.3\% and 20.8\% for their corresponding hierarchical agents on long-horizon tasks, compared to random segmented baselines.
Our method can leverage the diverse YouTube videos to train instruction-following agents.
The project page is at \href{https://craftjarvis.github.io/SkillDiscovery/}{https://craftjarvis.github.io/SkillDiscovery/}.
\end{abstract}

\newcommand{\omnijarvisresult}{11.3\%}

\section{Introduction}

\begin{table}[!ht]
\renewcommand{\arraystretch}{1.2}
\hspace{0.2in}
\centering
\small
\setlength{\tabcolsep}{0.5pt}
\begin{tabularx}{0.99\linewidth}{@{}>{\centering\arraybackslash}m{0.6in} >{\centering\arraybackslash}m{1.4in} >{\centering\arraybackslash}m{0.99\linewidth-2.0in}@{}}
\toprule
\textbf{Method} & \textbf{Illustration} & \textbf{Type} \\ \midrule
Random splitting & \includegraphics[width=1.1in]{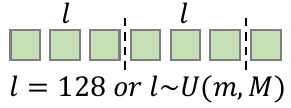}  &\footnotesize \makecell[b]{rule-based} \\ \midrule
Reward-driven & \includegraphics[width=1.3in]{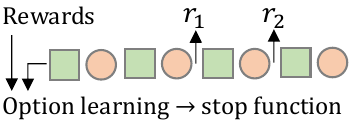}  &\footnotesize \makecell[b]{rule-based} \\ \midrule
Top-down & \includegraphics[width=1.4in]{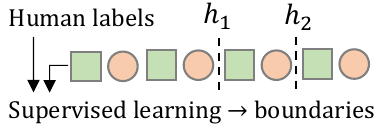}  &\footnotesize \makecell[b]{rule-based} \\ \midrule
Bottom-up & \includegraphics[width=1in]{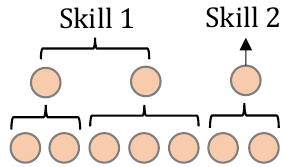}  &\footnotesize rule-based \\ \midrule \midrule
\textbf{Ours (SBD)} & \includegraphics[width=1in]{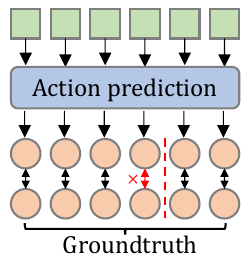} & learning-based \\
\bottomrule
\end{tabularx}
\caption{\textbf{Comparisons between existing segmentation methods and our method SBD.} 
Existing methods usually rely on human-designed rules, while our method is learning-based. 
\textbf{Random splitting} can result in a single skill spanning different segments or multiple skills located within one segment.
\textbf{Reward-driven} methods require additional reward information, which is challenging for human annotators to label.
\textbf{Top-down} methods often result in limited skill diversity and high computation costs.
\textbf{Bottom-up} methods struggle in visually partially observable environments.
\raisebox{0.15\baselineskip}{\includegraphics[valign=c,height=0.5\baselineskip]{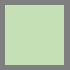}} are visual observations and \raisebox{0.15\baselineskip}{\includegraphics[valign=c,height=0.5\baselineskip]{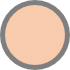}} are actions.}
\label{table: limitations}
\end{table}

Most existing LLM-based instruction-following agents adopt a two-tier architecture of a planner and a controller. The planner decomposes high-level instructions into atomic skills, which the controller then maps to actions based on current observations for interaction with the environment~\citep{omnijarvis,deps,jarvis-1,li2024optimus,voyager,cheng2024exploring}. Training such agents typically involves segmenting long trajectories into atomic skill sequences, followed by separate training of the planner and controller. 

Learning skills from long-sequence videos is critical for building such hierarchical agents. 
However, real-world video data are often long-horizon, unsegmented, and with no fine-grained skill labels.
In addition, the concept of a ``skill'' is ill-defined and varies widely across domains such as video gaming~\citep{groot1,voyager,mastering,proagent}, robotic control~\citep{rt-2}, and autonomous driving~\citep{auto-driving}. This ambiguity, along with the immense diversity of skills in open worlds, makes skill learning particularly challenging, especially in partially observable settings. To enable open-world skill learning from unsegmented demonstration videos, the first challenge is to segment them into semantically meaningful, self-contained skills. 

We list the existing segmentation methods in \cref{table: limitations}. The naive \textbf{random splitting} methods~\citep{groot1,steve1} divide videos into segments of predefined lengths (e.g., fixed length, uniform distribution). However, they do not ensure that each segment contains a distinct skill. Additionally, predefined lengths may not match the actual distribution of skill length in real world (see \cref{subsec:visual}). \textbf{Reward-driven} methods~\citep{option} discover skills through the environment's reward signal. They cannot capture skills with no associated rewards and may split one skill into multiple segments when rewards are repeatedly gained during execution. \textbf{Top-down} methods~\citep{taco,rt-h} rely on predefined skill sets from human experts. They use manual labeling or supervised learning to segment videos. Although this method can produce reasonable results, it is expensive and limited by the narrow range of predefined skills. \textbf{Bottom-up} methods~\citep{buds,FAST} use algorithms such as agglomerative clustering or byte-pair encoding~\citep{bpe} to split action sequences. However, they struggle in visually partially observable settings where both observations and actions must be considered. All the methods above rely on human-designed rules to segment the videos, which highlights the need for a learning-based method that can adaptively segment skills from unsegmented videos in open world.

Inspired by event segmentation theory (EST)~\citep{EST}, which proposes that humans naturally split continuous experiences into discrete events when prediction errors in perceptual expectations rise, we introduce Skill Boundary Detection (SBD)—a method for automatically identifying potential skill boundaries in long trajectories. 
SBD employs a predictive model trained on a dataset of unsegmented videos to predict future actions based on past observations, effectively capturing temporal dependencies~\citep{vpt}. We then use this pretrained model to make predictions on unsegmented videos and compare them to the ground-truth actions. A significant increase in prediction error indicates a shift in the skill being executed (\cref{thm:non-switching and switching}), enabling us to detect boundaries between different skills. SBD relies on self-supervised learning, removing the need for additional human labeling. This allows SBD to utilize a wide range of YouTube videos to train instruction-following agents.

We evaluate SBD in Minecraft~\citep{minerl,minedojo,mcu}, a rich open world simulator. First, we apply SBD to create a segmented Minecraft video dataset. We then re-train a video-conditioned policy~\citep{groot1} and a language-conditioned policy~\citep{steve1} on this dataset. We evaluate them on a diverse Minecraft skills benchmark~\citep{mcu}. The improved versions of these policies result in relative performance increases of 63.7\% and 52.1\% compared to the original ones, respectively. We also test their corresponding hierarchical agents that use behavioral cloning and in-context learning, achieving relative performance increases of {\omnijarvisresult} and 20.8\%, respectively. These findings show the effectiveness of our proposed method for skill discovery from unsegmented demonstrations and its potential to advance open-world skill learning.

\begin{figure*}[!ht]
\begin{center}
\includegraphics[width=0.98\linewidth]{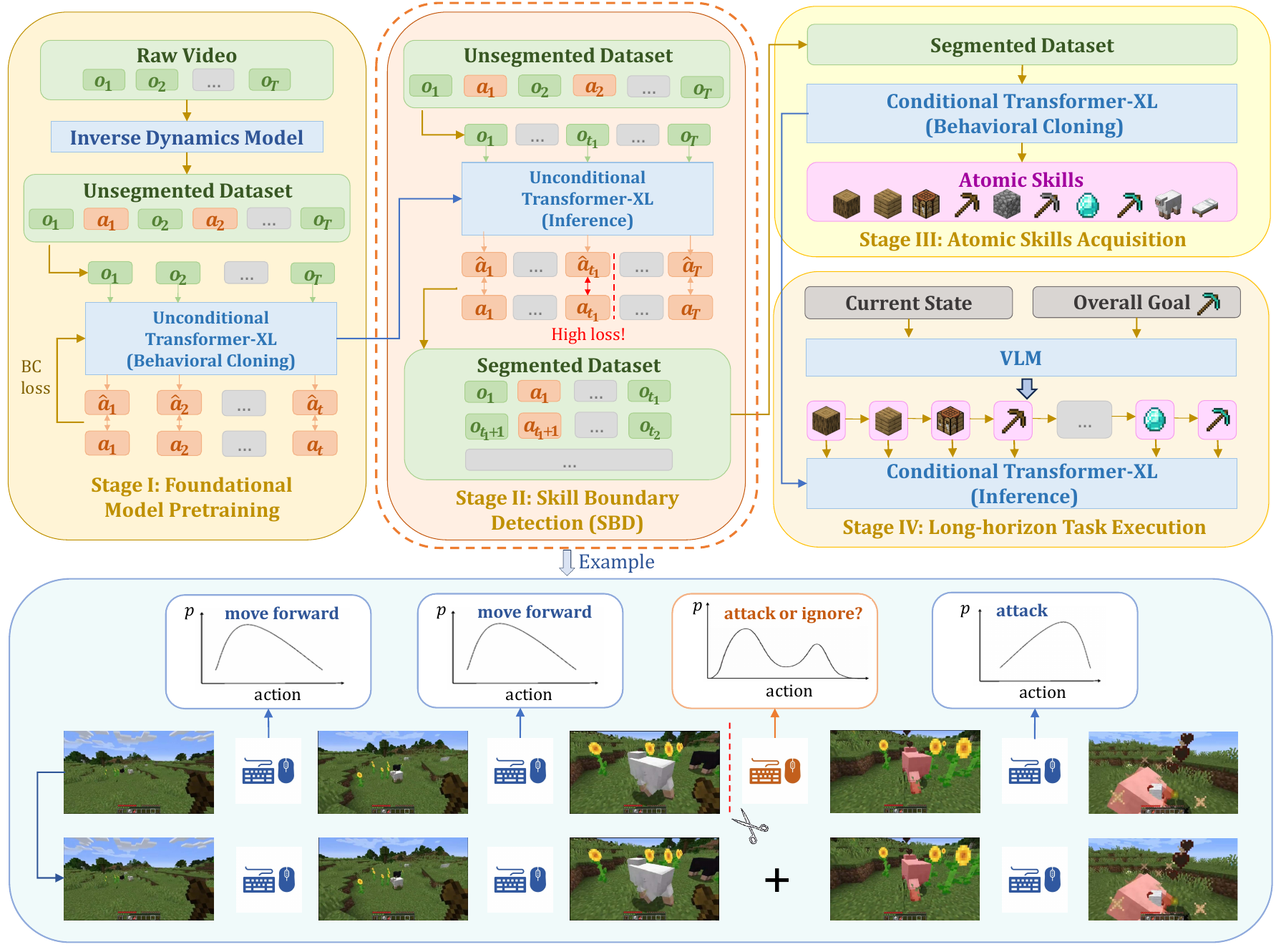}
\end{center}
\caption{
\textbf{Pipeline of our method SBD for discovering skills from unsegmented demonstration videos.} 
\textbf{Stage I:} An \textit{unconditional} Transformer-XL based policy model~\citep{transformerxl,vpt} is pretrained on an unsegmented dataset to predict future actions (labeled by an inverse dynamics model) based on past observations using behavioral cloning.
\textbf{Stage II:} The pretrained unconditional policy will produce a high predicted action loss when encountering uncertain observations (e.g., deciding whether to kill a new sheep) in open worlds. These timesteps should be marked as skill boundaries, indicating the need for additional instructions to control behaviors. We segment the long unsegmented videos into a series of short atomic skill demonstrations.
\textbf{Stage III:} We train a \textit{conditional} Transformer-XL based policy model on the segmented dataset to master a variety of atomic skills.
\textbf{Stage IV:} Finally, we use hierarchical methods (a combination of vision-language models and the conditional policy) to model the long demonstration videos and follow long-horizon instructions.
}
\label{fig:pipeline}
\end{figure*}

\section{Problem Formulation}
\label{sec:preliminaries}

\paragraph{Skills in Behavioral Cloning}
Behavioral cloning learns a policy $\pi: \mathcal{O} \rightarrow \mathcal{A}$ from expert demonstrations, mapping observations to actions based on pairs $(o_i, a_i)_{i=1}^T$. In partially observable settings, the policy typically takes the observation history $o_{1:t}$ as input. To enable goal-directed behavior, recent works~\citep{vln-clip,ogn1,traj} learn goal-conditioned policies $\pi(o_{1:t}, g)$, where $g$ denotes a target goal. Such goal-conditioned policies are referred to as skills, as they encode reusable behaviors for achieving specific goals.

\paragraph{Hierarchical Agent Architecture}
Directly learning policy $\pi(o_{1:t}, g)$ is challenging in complex open-world environments like Minecraft. Goals such as \texttt{build a house} or \texttt{mine diamond blocks} involve numerous intermediate steps, making it computationally and conceptually difficult to model an agent's behavior without breaking the goal into manageable parts. To tackle this, hierarchical agent architectures are often used~\citep{voyager,palm-e,rocket-1,alma}. These architectures consist of a high-level planner and a low-level controller. The planner decomposes the overall goal into a series of sub-goals and decides which one the controller should address. The controller then focuses on achieving the current sub-goal by employing a specific skill.
Specifically, for a certain $t$, the policy can be represented as: 
\begin{equation}
    \pi(a|o_{1:t}, g) = \pi_1(g_t|o_{1:t}, g)\pi_2({a|o_{1:t}, g_t})
\end{equation} where $\pi_1$ is the planner's policy and $\pi_2$ is the controller's policy. The planner periodically (not at every step) checks whether the current sub-goal has been finished and decides whether to change it, so $g_t$ does not change frequently. 

\paragraph{Identifying Implicit Skills in Trajectories}
A critical challenge in training hierarchical agents, as described above, is the identification of skills within the trajectory data. To be more specific, given $\tau=(o_i, a_i)_{i=1}^T$, we want to split it into $\tau_1=(o_i, a_i)_{i=1}^{t_1}, \tau_2=(o_i, a_i)_{i=t_1}^{t_2}, \ldots, \tau_k=(o_i, a_i)_{i=t_{k-1}}^{T}$ where each sub-trajectories $\tau_j$ shows an independent skill $\pi(g_j)$. Since some controllers use self-supervised methods to learn sub-goals~\citep{steve1,groot1}, we don't have to identify $g_j$, but must identify $t_j$ (\ie, the moments at which the agent adopts a different skill). 
For clarity, in the subsequent sections, we denote the skill adopted at a certain time step $t$ as $\pi_t$, corresponding to $\pi(g_t)$.

\section{Method}
\label{sec:method}

We first introduce our overall pipeline to build policies and long-horizon hierarchical agents based on unsegmented videos in Section \ref{sec:pipeline}. Then we introduce how we split the unsegmented videos into skill sequences in Section \ref{sec:sbd_1}. We finally introduce the implementation details in Section \ref{sec:details}.


\subsection{Pipeline}\label{sec:pipeline}

Given a long sequence of trajectories $D = {(o_i, a_i)}_{i=1}^T$, we first train an unconditional policy $\pi_{\text{unconditional}}$ using behavioral cloning to predict the actions purely based on past observations as follows:
\begin{equation}
    \min_{\theta} \sum_{t \in [1 \dots T]} -\log \pi_{\text{unconditional}} (a_t | o_{1:t}).
\end{equation}
When training on action label-free videos, the action labels are generated from an Inverse Dynamics Model~\citep{vpt}~.
We then evaluate this unconditional policy $\pi_\text{uncondtional}$ on the entire dataset $D$ of long sequences. Our proposed method SBD segments the complete video into a series of short segments $D_{seg} = {(o_i,a_i)}_{i=m}^{n}$ based on the evaluation results, where $m$ and $n$ is the selected segmentation timestamps. $D_{seg}$ is then used for skill learning to obtain conditional policies~\citep{groot1,steve1} that take additional videos or text as instructions. 
Finally, we integrate these skills with vision language models to build hierarchical agents, enabling them to complete long-horizon tasks~\citep{omnijarvis,jarvis-1}. See the overall pipeline in \cref{fig:pipeline}.

\subsection{Skill Boundary Detection (SBD)}\label{sec:sbd_1}

\begin{algorithm}[t]
\caption{Skill Boundary Detection}\label{alg:SBD}
\begin{algorithmic}[1]
\STATE \textbf{Input:} \((o_i, a_i, e_i)_{i=1}^T\), a model \(M\) to predict \(a_t\) given \(o_{1:t}\). \(e_i\) is the boolean external information indicating whether this step should be a boundary.
\STATE \textbf{Initialize:} $\text{begin} \gets 1$, $\text{loss\_history} \gets []$, $\text{boundaries} \gets []$
\FOR{$t \gets 1$ to $T$}
    \STATE $\text{loss} \gets M(a_t \mid o_{\text{begin}:t})$
    \STATE $\text{loss\_history.append}(\text{loss})$
    \IF{$\text{loss} - \text{mean}(\text{loss\_history}) > \text{GAP}$ \textbf{or} $e_t$ \text{is true}}
        \STATE $\text{boundaries.append}(t)$
        \STATE $\text{begin} \gets t$
        \STATE $\text{loss\_history} \gets []$
    \ENDIF
\ENDFOR
\STATE \textbf{Return:} $\text{boundaries}$
\end{algorithmic}
\end{algorithm}

SBD takes a long unsegmented trajectory and the unconditional action-prediction model as input. A sliding window is used to simulate the model's memory of past observations. At each time step $t$, we use the model to predict the next action and compare it with the ground truth to compute the loss. A skill change is considered likely if the loss exceeds the average loss by a hyperparameter, GAP, or if an external indicator is true. In such cases, $t$ is marked as a boundary. The model's memory is then cleared, and the algorithm proceeds to analyze the next sub-trajectory.

We then explain the two core components of the algorithm: loss and external information.

\subsubsection{Boundary with Entropy Loss}\label{subsec:loss}

In this section, we explain the core idea of our method: why loss can indicate skill boundaries. Intuitively, without goal information, similar observations in an open world can inherently lead to different actions—even with perfect modeling. For example, upon seeing a sheep, both \textit{ignore} and \textit{attack} may be plausible depending on the (unknown) goal. Skill transitions are moments where such ambiguity is highest. Since continuing a skill is more likely than changing it, the policy learns to favor the former, making skill transitions statistically surprising. According to Bayes' rule, when the loss of the policy increases largely, it is likely that a skill transition has occurred.

To support this, we introduce three key assumptions about skills: skill consistency, skill confidence, and action deviance at skill transition. 

The first assumption is that the skill being used does not change frequently (less than $1/K$). The idea is that the agent should consistently stick to a skill unless there is a strong reason to change to another, which would result in an increase in predictive loss.

\begin{assumption}[Skill Consistency] \label{assumption: consistency}
There exists an adequately large $K$, $\forall t,$
\begin{equation}
    P(\pi_{t+1} \ne \pi_t|o_{1:t+1}) < 1/K
\end{equation}
\end{assumption}

The second assumption states that, at any given time, the agent is confident in the action it takes. In other words, the probability of the agent choosing an action with confidence $c$ is greater than $1-\delta$. This assumption ensures that the policy reliably makes decisions based on its learned skills, rather than being uncertain in its actions. This helps the unconditional model make accurate predictions when the agent does not change its skill.

\begin{assumption}[Skill Confidence] \label{assumption: confidence}
There exists $c$ and an adequately small $\delta, \forall t,$
\begin{equation}
    P(\pi_t(a_t|o_{1:t})>c) > 1 - \delta
\end{equation}
\end{assumption}

The third assumption posits that when an agent changes its skill, it is likely to perform an action that is significantly less probable under the previous skill. This "surprising" action generates a clear signal, enabling us to detect the skill transition. Although the agent might change policies without performing a surprising action, such instances suggest that the agent is in a transitional, ambiguous phase between two policies, which is inherently challenging to identify. In essence, we aim to recognize the first clear skill boundary.

\begin{assumption}[Action Deviance at Skill Transition] \label{assumption: deviance}
There exists $m$, $\forall t$, when $\pi_1=\pi_2=...=\pi_t \ne \pi_{t+1}$,
\begin{equation}
\frac{\pi_t(a_{t+1}|o_{1:t+1})}{(\prod_{i=1}^t\pi_t(a_{i}|o_{1:i}))^{1/t}} < \frac{1}{2}m
\end{equation}
\end{assumption}

By the law of total probability, we have
\begin{equation}
\begin{split}
\label{eq:transition}
    &P(a_{t+1}|o_{1:t+1}) =P(\pi_{t+1}=\pi_t|o_{1:t+1})\pi_t(a_{t+1}|o_{1:t+1}) \\     &\quad+\sum_{\pi\ne\pi_t}P(\pi_{t+1}=\pi|o_{1:t+1})\pi(a_{t+1}|o_{1:t+1})
\end{split}
\end{equation}

Intuitively, if the agent changes its skill, the predictive probability will be low. This is because, in \cref{eq:transition}, in the first term the probability of $a_{t+1}$ under the original skill $\pi_t(a_{t+1}|o_{1:t+1})$ is low, and in the second term the probability of skill transition is low. On the contrary, if the agent does not change its skill, both terms will be high, so the predictive probability will be high. Therefore, we have the following theorem regarding the bounds of relative predictive probability under scenarios of skill transition and non-transition.

\begin{theorem}[bounds of relative predictive probability] \label{thm:non-switching and switching}
If $\pi_1=\pi_2=...=\pi_{t}=\pi_{t+1}$, then we have 
\begin{equation}
P(\frac{P(a_{t+1}|o_{1:t+1})}{(\prod_{i=1}^t P(a_{i}|o_{1:i}))^{1/t}}>\frac{(K-1)c}{K}) > 1-\delta
\end{equation}

If $\pi_1=\pi_2=...=\pi_{t}\ne\pi_{t+1}$, then we have 
\begin{equation}
\begin{split}
P(\frac{P(a_{t+1}|o_{1:t+1})}{(\prod_{i=1}^t P(a_{i}|o_{1:i}))^{1/t}} < \frac{Km}{2(K-1)} + \frac{1}{c(K-1)}) \\
> 1-t\delta
\end{split}
\end{equation}
\end{theorem}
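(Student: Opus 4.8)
The plan is to bound, separately, the numerator $P(a_{t+1}|o_{1:t+1})$ and the denominator $(\prod_{i=1}^t P(a_i|o_{1:i}))^{1/t}$ of the ratio appearing in the statement, call it $R_{t+1}$, using the decomposition \cref{eq:transition} together with the three assumptions. On the event $\pi_1=\cdots=\pi_t$ write $\bar\pi$ for the common skill and set $G := (\prod_{i=1}^t \bar\pi(a_i|o_{1:i}))^{1/t}$, the geometric mean of its per-step probabilities. The key preliminary estimate is a deterministic lower bound for the denominator on this event: applying \cref{eq:transition} at each step $i\le t$ and discarding the non-negative sum over $\pi\ne\bar\pi$ gives $P(a_i|o_{1:i}) \ge P(\pi_i=\bar\pi|o_{1:i})\,\bar\pi(a_i|o_{1:i})$, and \cref{assumption: consistency} gives $P(\pi_i=\bar\pi|o_{1:i}) > (K-1)/K$; multiplying over $i$ and taking $t$-th roots yields $(\prod_{i=1}^t P(a_i|o_{1:i}))^{1/t} \ge \tfrac{K-1}{K}G$. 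Since also every $P(a_i|o_{1:i})\le 1$, the denominator never exceeds $1$, which is all that is needed for the non-switching bound.

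For the non-switching case ($\pi_1=\cdots=\pi_{t+1}$), apply \cref{eq:transition} at step $t+1$, drop the second sum, and use \cref{assumption: consistency} to get $P(a_{t+1}|o_{1:t+1}) \ge \tfrac{K-1}{K}\bar\pi(a_{t+1}|o_{1:t+1})$; by \cref{assumption: confidence} the factor $\bar\pi(a_{t+1}|o_{1:t+1})$ exceeds $c$ with probability greater than $1-\delta$. Dividing by the denominator, which is at most $1$, gives $R_{t+1} > \tfrac{(K-1)c}{K}$ on that single event, proving the first inequality with failure probability only $\delta$.

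For the switching case ($\pi_1=\cdots=\pi_t\ne\pi_{t+1}$), split the numerator by \cref{eq:transition}: the first term equals $P(\pi_{t+1}=\bar\pi|o_{1:t+1})\,\bar\pi(a_{t+1}|o_{1:t+1}) \le \bar\pi(a_{t+1}|o_{1:t+1}) < \tfrac{m}{2}G$, the last step being exactly \cref{assumption: deviance} (which holds deterministically on the switching event), while the remaining sum is bounded by $\sum_{\pi\ne\bar\pi}P(\pi_{t+1}=\pi|o_{1:t+1}) < 1/K$ via \cref{assumption: consistency}. Dividing by the denominator bound $\tfrac{K-1}{K}G$ yields $R_{t+1} < \tfrac{Km}{2(K-1)} + \tfrac{1}{(K-1)G}$. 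Finally, a union bound over $i=1,\dots,t$ for the confidence events of \cref{assumption: confidence} shows that with probability greater than $1-t\delta$ every $\bar\pi(a_i|o_{1:i})$ exceeds $c$, hence $G>c$ and $\tfrac{1}{(K-1)G} < \tfrac{1}{c(K-1)}$, giving the second inequality.

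The only delicate point is the probability accounting: \cref{assumption: consistency} (a statement about the posterior over the next skill) and \cref{assumption: deviance} (a statement about the realized actions on the switching pattern) are both deterministic on the relevant events and contribute nothing, whereas \cref{assumption: confidence} is the sole source of failure probability, invoked once in the non-switching bound but $t$ times — to control the geometric mean $G$ — in the switching bound, which is exactly where the factor $1-t\delta$ comes from. A secondary issue is keeping the two normalizations straight (a single predictive probability in the numerator versus a $t$-th-root geometric mean in the denominator) so that the $(K-1)/K$ factors recombine into the stated constants $\tfrac{Km}{2(K-1)}$ and $\tfrac{1}{c(K-1)}$; the $i=1$ term, having no preceding skill, is handled as a trivial base case.
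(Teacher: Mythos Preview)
Your proposal is correct and follows essentially the same approach as the paper's proof: both decompose via \cref{eq:transition}, use \cref{assumption: consistency} to bound the skill-transition posterior by $1/K$, use the denominator bound $(\prod_i P(a_i|o_{1:i}))^{1/t}\ge\tfrac{K-1}{K}G$ in the switching case and $\le 1$ in the non-switching case, invoke \cref{assumption: deviance} to control $\bar\pi(a_{t+1}|o_{1:t+1})/G$, and then apply \cref{assumption: confidence} once (non-switching) or $t$ times (switching) to finish. Your explicit union-bound argument for $P(G>c)>1-t\delta$ is in fact slightly cleaner than the paper's intermediate product step $P(G>c)>\prod_i P(\pi_i(a_i|o_{1:i})>c)$, which tacitly uses independence before reaching the same $1-t\delta$ conclusion.
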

\begin{proof}
    See \cref{app: proof} for the detailed proof.
\end{proof}

If $c>m$ and $(K-4)c^2>2$, then the lower bound of relative predictive probability under skill transition in \cref{thm:non-switching and switching} is greater than the upper bound of it under skill non-transition. Therefore, the theorem demonstrates that the model predicts actions less accurately when the agent changes its skill. Our action-prediction model uses the negative log-likelihood of actions $-\log P(a_t|o_{1:t})$ as loss. Consequently, at line 6 of \cref{alg:SBD}, when the loss exceeds the average loss by a predefined threshold, it is likely that a skill transition has occurred.

\subsubsection{Boundary with External Information}\label{subsec:info}

In \cref{subsec:loss}, we demonstrate that SBD can identify the first distinguishable skill boundary using only observations and actions. However, some datasets include additional external information such as privileged in-game data or VLM annotations~\citep{guo2025seed1,comanici2025gemini}. Incorporating these auxiliary signals improves the detection of skill boundaries that are otherwise difficult to recognize through loss-based detection alone. For example, in Minecraft, crafting an item does not involve abrupt changes in mouse movement, making it hard to detect boundaries purely with predictive action loss. In such cases, in-game logs of successful crafting events provide valuable hints. It is important to note that this is an optional component of SBD; as we will discuss in \cref{subsec:ablation}, it also performs well on datasets without external auxiliary information.

\begin{table*}[!ht]
\centering
\begin{threeparttable}
\renewcommand{\arraystretch}{1.2}
\setlength{\tabcolsep}{3pt}
\resizebox{0.99\linewidth}{!}{
\begin{tabular}{@{}ccccccccccccccc@{}}
\toprule
\textbf{Task} &  & \makecell[c]{\includegraphics[height=0.6cm,valign=c,keepaspectratio]{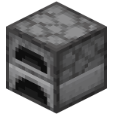} \\ use \\ furnace}
& \makecell[c]{\includegraphics[height=0.6cm,valign=c,keepaspectratio]{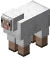} \\ hunt \\ sheep}
& \makecell[c]{\includegraphics[height=0.6cm,valign=c,keepaspectratio]{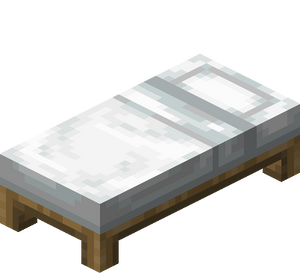} \\ sleep \\ in bed}
& \makecell[c]{\includegraphics[height=0.6cm,valign=c,keepaspectratio]{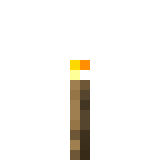} \\ use \\ torch}
& \makecell[c]{\includegraphics[height=0.6cm,valign=c,keepaspectratio]{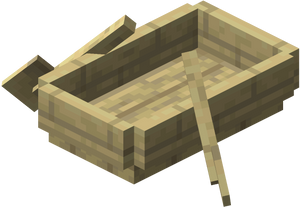} \\ use \\ boat}
& \makecell[c]{\includegraphics[height=0.6cm,valign=c,keepaspectratio]{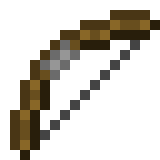} \\ use \\ bow}
& \makecell[c]{\includegraphics[height=0.6cm,valign=c,keepaspectratio]{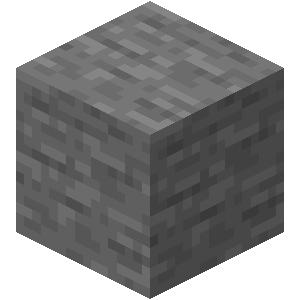} \\ collect \\ stone}
& \makecell[c]{\includegraphics[height=0.6cm,valign=c,keepaspectratio]{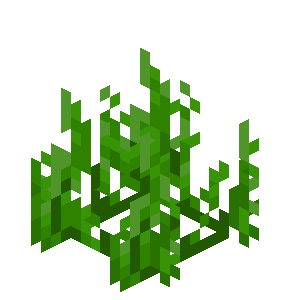} \\ collect \\ seagrass}
& \makecell[c]{\includegraphics[height=0.6cm,valign=c,keepaspectratio]{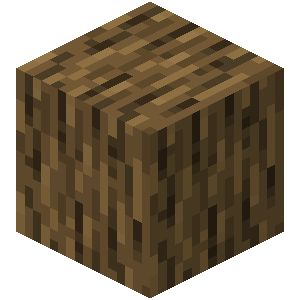} \\ collect \\ wood}
& \makecell[c]{\includegraphics[height=0.6cm,valign=c,keepaspectratio]{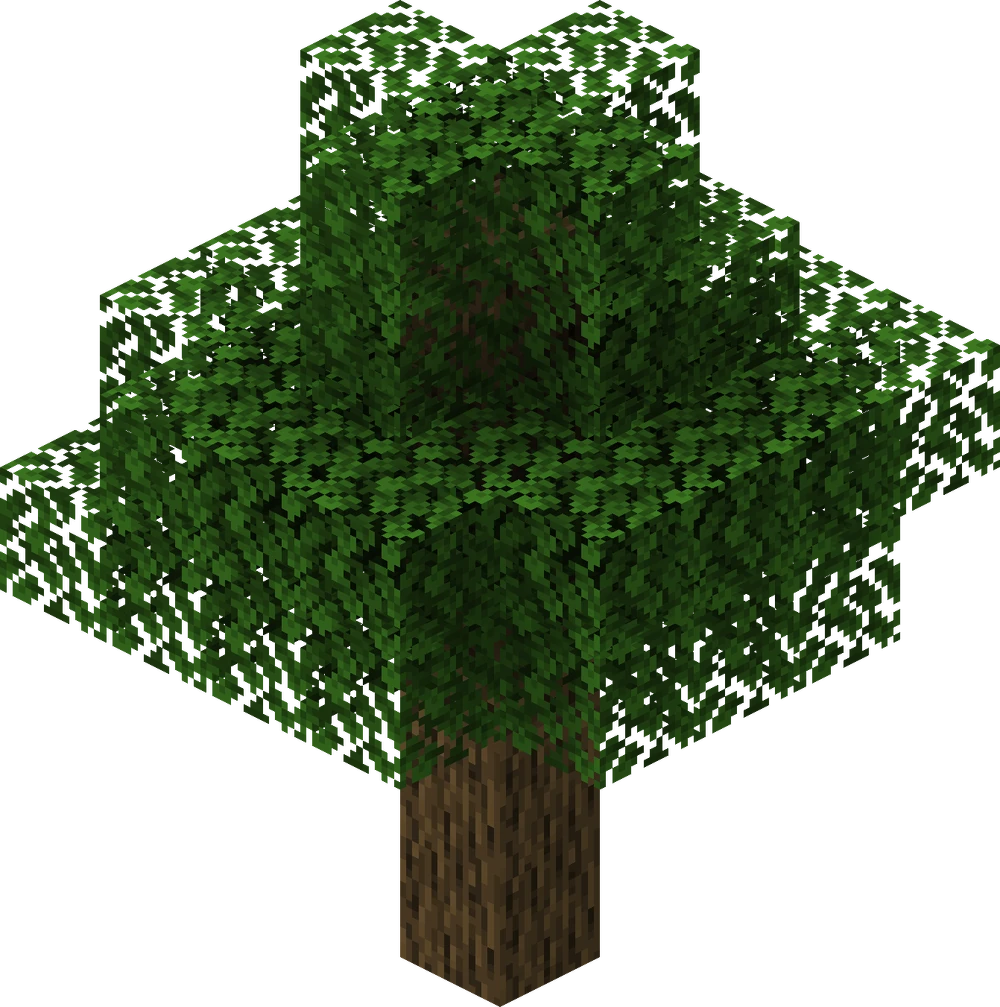} \\ find and \\ collect wood}
& \makecell[c]{\includegraphics[height=0.6cm,valign=c,keepaspectratio]{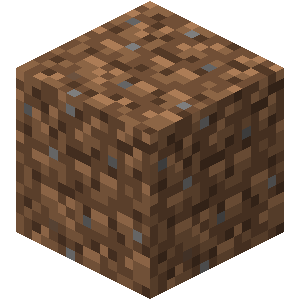} \\ collect \\ dirt}
& \makecell[c]{\includegraphics[height=0.6cm,valign=c,keepaspectratio]{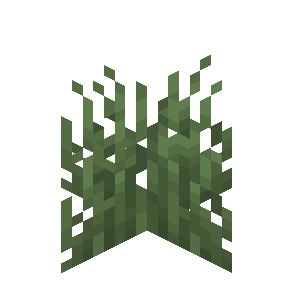} \\ collect \\ grass}
& \textbf{Average}\\  
\midrule
\multirow{1}{*}{\textbf{VPT}}
& \textit{original} & 11.0\% & 29.0\% & 0.0\% & 22.0\% & 7.0\% & 23.0\% & 77.0\% & 14.0\% & 73.0\% & 33.0\% & 30.0\% & 38.0\% & - \\
\midrule
\multirow{3}{*}{\makecell[c]{\textbf{GROOT}\\(video\\conditioned)}} 
    & \textit{original} & 21.0\% & 26.0\% & 100.0\% & 97.0\% & 71.0\% & 30.0\% & 21.7 & 34.0\% & 76.0\% & 19.0\% & 14.5 & 9.5 & -  \\  
    & \textit{\textbf{ours}} & 30.0\% & 54.0\% & 100.0\% & 88.0\% & 93.0\% & 80.0\% & 26.8 & 51.0\% & 90.0\% & 44.0\% & 19.7 & 25.4 & - \\  
    & $\Delta$ & \green{+42.9\%} & \green{+107.7\%} & 0 & \red{-9.3\%} & \green{+30.1\%} & \green{+166.7\%} & \green{+23.3\%} & \green{+50.0\%} & \green{+18.4\%} & \green{+131.6\%} & \green{+36.1\%} & \green{+166.3\%} & \textbf{+63.7\%} \\  
\midrule
\multirow{3}{*}{\makecell[c]{\textbf{STEVE-1}\\(image \& text\\conditioned}}
    & \textit{original} & 0.0\% & 1.0\% & 33.3\% & 33.3\% & 18.8\% & 65.6\% & 96.9\% & 18.8\% & 80.2\% & 57.3\% & 44.8\% & 46.9\% & - \\  
    & \textit{\textbf{ours}} & 0.0\% & 3.1\% & 40.6\% & 77.1\% & 42.7\% & 71.9\% & 96.9\% & 47.9\% & 84.4\% & 67.7\% & 43.8\% & 71.9\% & -\\  
    & $\Delta$ & 0 & $-$ & \green{+21.9\%} & \green{+131.3\%} & \green{+127.8\%} & \green{+9.5\%} & 0 & \green{+155.6\%} & \green{+5.2\%} & \green{+18.2\%} & \red{-2.3\%} & \green{+53.3\%} & \textbf{+52.1\%} \\ 
\bottomrule
\end{tabular}
}
\end{threeparttable}
\caption{\textbf{Success rate of different policies on Minecraft skill benchmark.} 
For VPT~\citep{vpt}, we report the results of the behavioral cloning version. For GROOT~\citep{groot1} and STEVE-1~\citep{steve1}, we report the results of the original and our re-trained models with SBD, respectively.
A value with \% indicates the average success rate, while a value without \% indicates the average rewards.
See more details in \cref{app: benchmark}.}
\label{main_result_short}
\end{table*}

\begin{table*}[!ht]
\centering
\small
\renewcommand{\arraystretch}{1.2}
\captionsetup[subtable]{justification=centering}

\begin{subtable}{0.37\linewidth}
\resizebox{0.98\linewidth}{!}{
    \setlength{\tabcolsep}{3pt}
    \begin{tabular}{@{}ccccccc@{}}
    \toprule
     & \textbf{Method} & Wood & Food & Stone & Iron & \textbf{Average} \\  
    \midrule
        & \textit{original} & 95\% & 44\% & 82\% & 32\%  & -  \\  
        & \textit{\textbf{ours}} & 96\% & 55\% & 90\% & 35\% & - \\  
        & $\Delta$ & \green{+1.1\%} & \green{+25.0\%} & \green{+9.8\%} & \green{+9.4\%} & \textbf{+11.3\%} \\
    \bottomrule
    \end{tabular}
}
    \caption{OmniJARVIS (behavioral cloning)}
\end{subtable}
\begin{subtable}{0.61\linewidth}
\resizebox{0.98\linewidth}{!}{
    \setlength{\tabcolsep}{3pt}
    \begin{tabular}{@{}ccccccccccccccc@{}}
    \toprule
     & \textbf{Method} & Wood & Oak & Birch & Stone & Iron & Diamond & Armor & Food & \textbf{Average} \\  
    \midrule
        & \textit{original} & 92\% & 89\% & 90\% & 90\% & 33\% & 8\% & 12\% & 39\% & -  \\  
        & \textit{\textbf{ours}} & 97\% & 95\% & 94\% & 91\% & 35\% & 10\% & 19\% & 62\% & - \\  
        & $\Delta$ & \green{+5.4\%} & \green{+6.7\%} & \green{+4.4\%} & \green{+1.1\%} & \green{+6.1\%} & \green{+25.0\%} & \green{+58.3\%} & \green{+59.0\%} & \textbf{+20.8\%} \\
    \bottomrule
    \end{tabular}
}
    \caption{JARVIS-1 (in-context learning)}
\end{subtable}
\caption{\textbf{Success rate of two agents on long programmatic tasks.} The goal-conditioned policies of the agents are trained on the dataset segmented by SBD and the original random splitting method. In each group, the agent is required to obtain a certain type of item from scratch or be given an iron pickaxe. For example, the diamond group includes diamond pickaxe, diamond sword, jukebox, \etc.}
\label{main_result_long}
\end{table*}

\subsection{Implemetation Details}\label{sec:details}

We take the Minecraft environment~\citep{minerl,zheng2023towards} as the evaluation simulator. It offers a highly complex (> $10^{10^{20}}$ states), diverse, and dynamic environment ideal for testing open-ended agents requiring \textit{OOD generalization}~\citep{mcu}. See \cref{app:minecraft_env} for more information on this environment.

We now introduce the architectures and training/inference details of the policies and agents used in \cref{sec:pipeline}.

\paragraph{Unconditional Policy and Segmentation Details}
For the unconditional policy $\pi_\text{unconditional}$ in Stage I, we use the pre-trained vpt-3x models~\citep{vpt}. 
This is a foundational Transformer-XL~\citep{transformerxl} based model pretrained on large-scale YouTube data and finetuned on the early-game dataset using behavioral cloning.
The hyperparameter GAP in Stage II is set to 18, considering the average trajectory length and semantics. As for the external information, we use event-based information such as \texttt{mine\_block:oak\_log} or \texttt{use\_item:torch} available in the contractor dataset~\citep{vpt}. To identify potential skill transitions, only the final event in a series of identical consecutive events is marked as positive, as it signifies a possible skill transition. For instance, if the agent chops a tree repeatedly and then crafts a table, the video should be split at the moment the agent mines the last block of wood. Further details are provided in \cref{app: event}.

\paragraph{Policies for Atomic Skills}
We employ both video-conditioned and language-conditioned policies to learn atomic skills from the segmented demonstration videos, as shown in Stage III.
The video-conditioned policy is built upon GROOT~\citep{groot1}, which utilizes self-supervised learning to model instructions and behaviors. In the original GROOT, a fixed-length sequence of 128 frames serves as the instruction, which is encoded into a goal embedding using a conditioned variational autoencoder (C-VAE)~\citep{vae,cvae}. The decoder then predicts actions based on the instruction and environmental observations in an auto-regressive manner.
The language-conditioned policy is derived from STEVE-1~\citep{steve1}, an instruction-tuned VPT model capable of following open-ended texts or 16-frame visual instructions. This model is trained by first adapting the pretrained VPT model to follow commands in MineCLIP's~\citep{minedojo} latent space, and then training a C-VAE model to predict latent codes from the text. 

The original GROOT and STEVE-1 policies divide the training trajectories into segments of 128 frames and uniformly sample between 15 and 200 frames, respectively. We re-train these models using our SBD method on the segmented trajectories and compare the results to the original models to demonstrate the effectiveness of our approach.
We retain the minimum and maximum length settings. The algorithm for pruning the length of each trajectory is detailed in \ref{app: prune}. Most of the original hyperparameter settings are retained as specified in the original papers. A complete list of modified hyperparameters is provided in \cref{app: hyper}.

\paragraph{Long-term Instruction Following Agents}
We use hierarchical agents to model long trajectories, which are widely used by \citep{palm-e,omnijarvis,deps}.
We utilize in-context learning, as demonstrated by JARVIS-1~\citep{jarvis-1}, and behavioral cloning, as shown by OmniJARVIS~\citep{omnijarvis}, to evaluate the long-horizon instruction-following capabilities of hierarchical agents.
JARVIS-1 is a hierarchical agent that uses the text-conditioned STEVE-1 as its policy and pretrained vision-language models as planners. We replace the text-conditioned policy with our re-trained STEVE-1 based on SBD. OmniJARVIS is a vision-language action agent built on FSQ-GROOT, which encodes instructions into discrete tokens rather than continuous embeddings. It is trained on behavior trajectories encoded into unified token sequences.

\section{Experiments and Analysis}
\label{sec:results}

\begin{table*}[!ht]
\centering
\renewcommand{\arraystretch}{1.2}
\resizebox{0.99\linewidth}{!}{
\begin{tabular}{@{}cccccccccccccc@{}}
\toprule
\textbf{Task} & \makecell[c]{\includegraphics[height=0.6cm,valign=c,keepaspectratio]{figures/icons/furnace.png} \\ use \\ furnace}
& \makecell[c]{\includegraphics[height=0.6cm,valign=c,keepaspectratio]{figures/icons/hunt_sheep.png} \\ hunt \\ sheep}
& \makecell[c]{\includegraphics[height=0.6cm,valign=c,keepaspectratio]{figures/icons/sleep.png} \\ sleep \\ in bed}
& \makecell[c]{\includegraphics[height=0.6cm,valign=c,keepaspectratio]{figures/icons/torch.png} \\ use \\ torch}
& \makecell[c]{\includegraphics[height=0.6cm,valign=c,keepaspectratio]{figures/icons/boat.png} \\ use \\ boat}
& \makecell[c]{\includegraphics[height=0.6cm,valign=c,keepaspectratio]{figures/icons/bow.png} \\ use \\ bow}
& \makecell[c]{\includegraphics[height=0.6cm,valign=c,keepaspectratio]{figures/icons/stone.png} \\ collect \\ stone}
& \makecell[c]{\includegraphics[height=0.6cm,valign=c,keepaspectratio]{figures/icons/seagrass.png} \\ collect \\ seagrass}
& \makecell[c]{\includegraphics[height=0.6cm,valign=c,keepaspectratio]{figures/icons/wood.png} \\ collect \\ wood}
& \makecell[c]{\includegraphics[height=0.6cm,valign=c,keepaspectratio]{figures/icons/tree.png} \\ find and \\ collect wood}
& \makecell[c]{\includegraphics[height=0.6cm,valign=c,keepaspectratio]{figures/icons/dirt.png} \\ collect \\ dirt}
& \makecell[c]{\includegraphics[height=0.6cm,valign=c,keepaspectratio]{figures/icons/grass.png} \\ collect \\ grass}
& \textbf{Average}\\
\midrule
\text{-} & 21.0\% & 26.0\% & 100.0\% & \textbf{97.0\%} & 71.0\% & 30.0\% & 21.7 & 34.0\% & 76.0\% & 19.0\% & 14.5 & 9.5 & 63.3 \\
\textit{good} dist. & 38.0\% & \textbf{65.0\%} & 90.0\% & 89.0\% & 80.0\% & 10.0\% & 24.8 & 43.0\% & 96.0\% & 41.0\% & 15.5 & 19.0 & 79.9 \\
\midrule
\text{Info} & 33.0\% & 52.0\% & 100.0\% & 88.0\% & \textbf{97.0\%} & 32.0\% & 23.7 & \textbf{65.0\%} & 92.0\% & 47.0\% & 15.5 & 21.6 & 85.8 \\
\midrule
Loss (GAP=17.5) & \textbf{44.0\%} & 49.0\% & 100.0\% & 89.0\% & 91.0\% & 32.0\% & \textbf{27.3} & \textbf{65.0\%} & \textbf{96.0\%} & 38.0\% & 17.8 & 22.7 & 88.4 \\
Loss (GAP=18) & \textbf{44.0\%} & 54.0\% & 100.0\% & 94.0\% & 72.0\% & 46.0\% & 25.8 & 63.0\% & 95.0\% & \textbf{48.0\%} & 17.9 & 22.7 & 90.2 \\
Loss (GAP=18.5) & 41.0\% & 51.0\% & 100.0\% & 91.0\% & 73.0\% & 44.0\% & 26.8 & 57.0\% & 91.0\% & 45.0\% & 18.1 & 25.3 & 88.5 \\
\midrule
Both (GAP=18) & 30.0\% & 54.0\% & 100.0\% & 88.0\% & 93.0\% & \textbf{80.0\%} & 26.8 & 51.0\% & 90.0\% & 44.0\% & \textbf{19.7} & \textbf{25.4} & \textbf{91.7} \\
\bottomrule
\end{tabular}
}
\caption{\textbf{Ablation studies.} 
We report the evaluation results of GROOT trained on datasets segmented by random splitting with fixed length (-), random splitting following the length distribution of SBD (\textit{good} dist.), and SBD with different components and GAP values. A value with \% indicates the average success rate, while a value without \% indicates the average rewards. To enable a fair comparison between success rate and rewards, the average score is computed by scaling the best score to 100 and all other scores proportionally for each task. 
}
\label{tab:ablation result}
\end{table*}

In our experiments, we answer the following questions:\\
$\bullet$ Are the short segments obtained through SBD more consistent semantically, and can a better goal-conditioned policy be trained on these segments? \\
$\bullet$ Do improvements in goal-conditioned policies enhance the ability of hierarchical agents? \\
$\bullet$ Which component of SBD is more important?

\subsection{Experimental Setups}
\label{sec:setup}

\paragraph{Training Dataset}
We use OpenAI's contractor dataset 7.x (early game)~\citep{vpt} as our training dataset. It contains Minecraft offline trajectories with 68M frames with a duration of approximately 1000 hours, with at least half of the data from the first 30 minutes of the game. Our method generates a segmented dataset of 130k sub-trajectories using bc-early-game-3x~\citep{vpt} as the pretrained unconditional model. 

\paragraph{Evaluation Benchmarks}
For the goal-conditioned policies, we select basic skills such as \texttt{collect wood} and advanced skills like \texttt{use furnace} in Minecraft as the evaluation benchmark.
We test 12 different atomic skills designed based on MCU~\citep{mcu}.
All tasks are tested over 100 times, except for \texttt{sleep in bed} and \texttt{use bow}, which are evaluated 10 times using human rating because they cannot be automatically verified with in-game information.
The evaluation metrics are success rates or rewards (for finer-grained assessment of easier tasks). 
See more details in \cref{app: benchmark}.

The hierarchical agents are evaluated on long-horizon programmatic tasks that require the agents to start from an empty inventory in a new world until obtaining the final required items, such as \texttt{obtain an iron pickaxe from scratch}, which is usually a chain of atomic tasks. 
We split the long-horizon tasks into different groups, including wooden items, food, stone items, iron \etc.
For each group, we choose the tasks from JARVIS-1 benchmarks~\citep{jarvis-1} and evaluate each task over 30 times. 

\subsection{Main Results}

In this section, we first present a human analysis of the quality of the SBD-segmented videos. Then we present the results of SBD applied to the two policies and two agents in \cref{sec:details} on the benchmarks in \cref{sec:setup}.

\paragraph{Human Analysis of Segmented Videos}
We ask 10 people to analyze the SBD-segmented videos, each with 50 randomly sampled clips: On average, 54\% exhibit clear independent semantics aligned with human preference, 34\% are considered acceptable, and only 12\% could be better segmented. This further supports our method’s effectiveness.

\label{subsec:main result}
\paragraph{Short-horizon Atomic Tasks} 
As shown in \cref{main_result_short}, the policies showed substantial improvements across most tasks, with an average relative performance enhancement of 63.7\% for GROOT and 52.1\% for STEVE-1.

\paragraph{Long-horizon Programmatic Tasks}
As shown in \cref{main_result_long}, the agents have substantial improvements across most of the tasks, achieving an average relative performance enhancement of {\omnijarvisresult} for Omnijarvis and 20.8\% for JARVIS-1.

\subsection{Ablation Studies}
\label{subsec:ablation}


\begin{figure*}[ht]
\captionsetup[subfigure]{justification=centering}
    \begin{subfigure}{0.33\linewidth}
    \caption{~~~Info only.}
    \label{subfig:info}
    \includegraphics[width=0.99\linewidth]{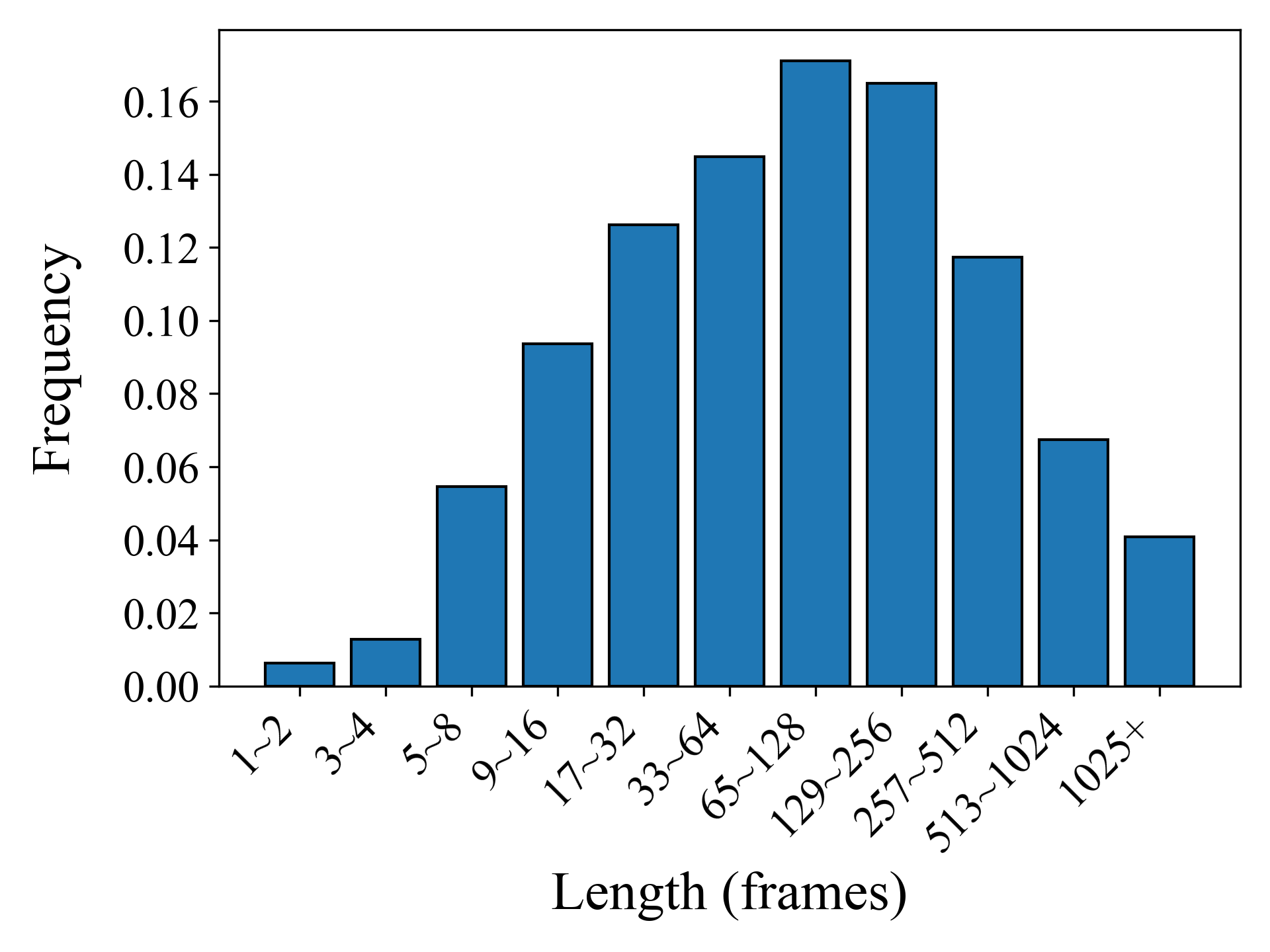}
    \end{subfigure}
    \begin{subfigure}{0.33\linewidth}
    \caption{~~~Loss only.}
    \label{subfig:loss}
    \includegraphics[width=0.99\linewidth]{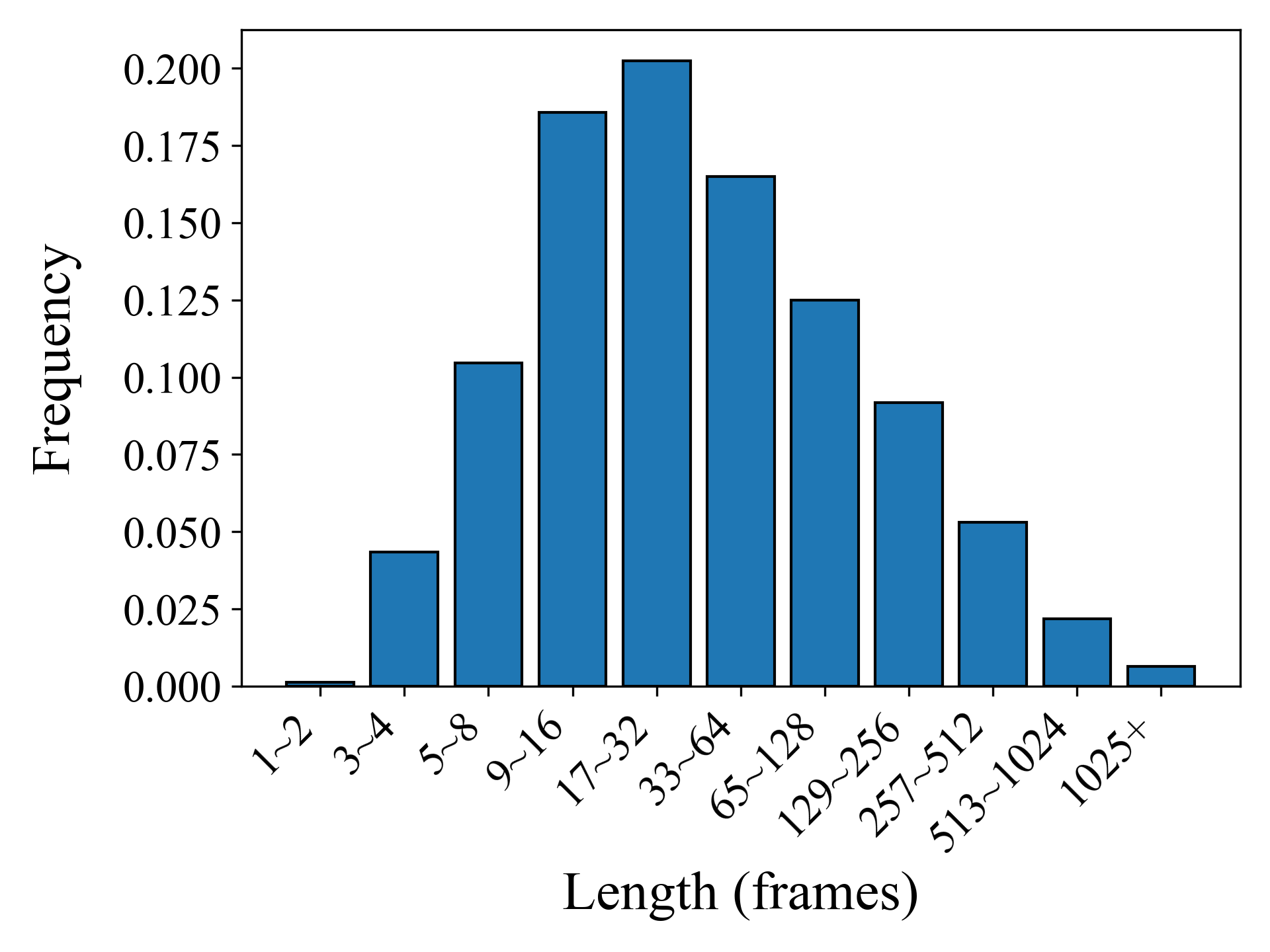}
    \end{subfigure}
    \begin{subfigure}{0.33\linewidth}
    \caption{~~~Both.}
    \label{subfig:both}
    \includegraphics[width=0.99\linewidth]{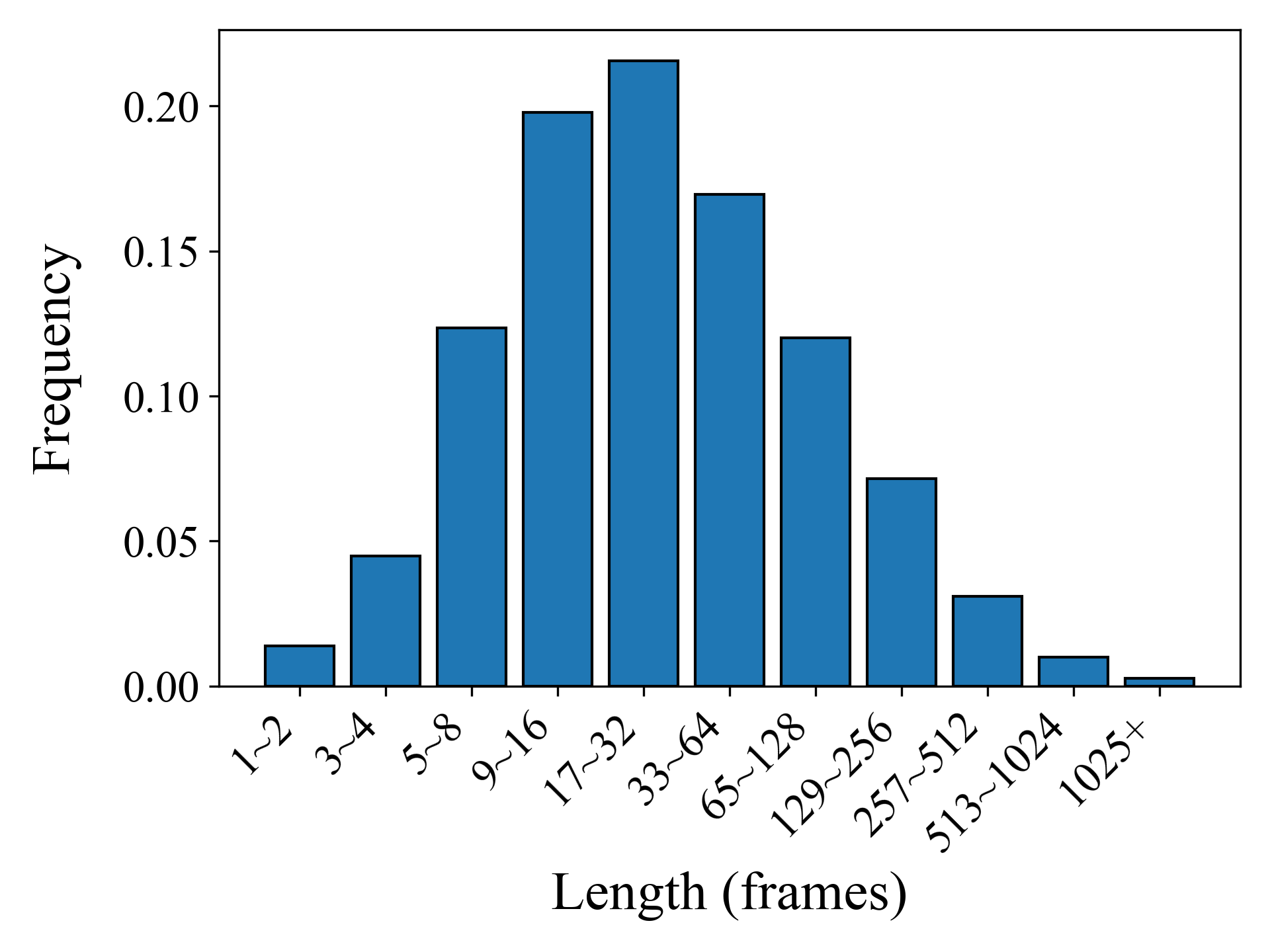}
    \end{subfigure}
    
    \caption{\textbf{The length distribution of segments, split by info and loss.} The info-only method is intrinsically semantically meaningful, suggesting that the loss-only method also identifies a semantically meaningful segmentation pattern. Furthermore, the similarity between the combined method and the loss-only method indicates that predictive loss is the primary factor in learning the segmentation pattern.
    }
    \label{fig:length}
\end{figure*}

\begin{figure*}[ht]
\begin{center}
\begin{subfigure}{0.99\linewidth}
    \includegraphics[width=0.99\linewidth]{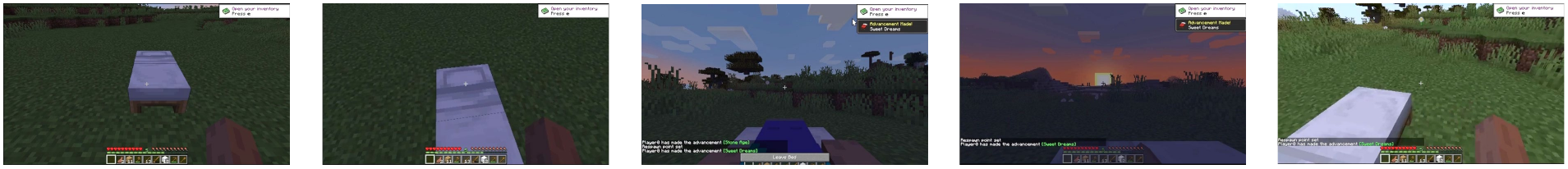}
\end{subfigure}
\begin{subfigure}{0.99\linewidth}
\includegraphics[width=0.99\linewidth]{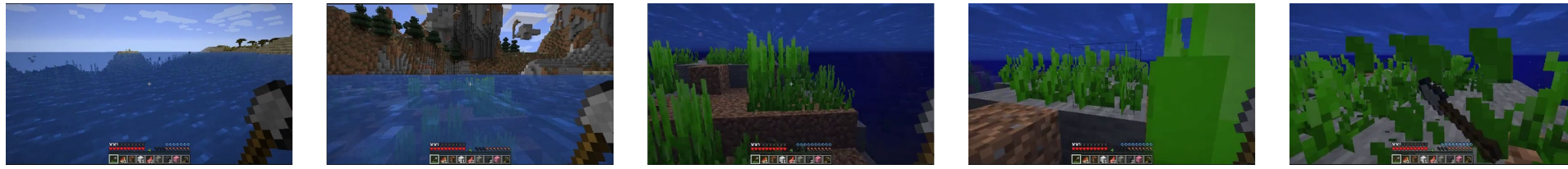}
\end{subfigure}
\end{center}
\caption{\textbf{Video Segment Examples. Top: \texttt{sleep in bed}. Bottom: \texttt{collect grass}}. Each segment is accompanied by five screenshots. The first and last screenshots represent the initial and final frames of the segment, respectively. The remaining three screenshots are manually selected to best illustrate the skill's progression. More segments can be found in \cref{app: skill videos}.
}
\label{fig:video examples}
\end{figure*}

\paragraph{SBD Components} As introduced in \cref{subsec:loss,subsec:info}, the environment information and the skill boundary detection loss are important components for accurate segmentation of long videos.
In this section, we explore the effectiveness of each component. 
We compare the full SBD with random splitting (w/o loss and info), SBD w/ info only, and SBD w/ loss only.
The results are presented in \cref{tab:ablation result}. Our findings are as follows:
(i) Using external information alone significantly improves performance, with a relative average increase of 35.5\% over the original agent.
(ii) Using loss alone results in a better performance (a relative average increase of 5.1\%) compared to using external information alone, highlighting the effectiveness of the core component of our method. This demonstrates that our method achieves strong performance even without external information.
(iii) Combining loss and external information yields the best overall performance. This suggests that each component captures unique segmentation patterns that the other misses. 

We observe performance declines when employing both components in tasks \texttt{use furnace} and \texttt{collect seagrass}. 
This may be due to both components splitting the trajectory at the same intervals but not identical steps, leading to redundant trajectory fragments.

\paragraph{Hyperparameter sensitivity} GAP is an important hyperparameter of SBD. To verify the stability of our method, we also conduct an ablation study on different values of GAP. As shown in \cref{tab:ablation result}, our method is not sensitive to it (only 2.0\% relative average performance decline).

\paragraph{Improved Baseline} A simple idea to improve random splitting is to follow a similar length distribution as SBD. We add this improved baseline to show that our method also catches features other than length distribution. As shown in \cref{tab:ablation result}, while a better length distribution can improve baseline performance (by 26.2\% relatively), our method still outperforms it (by 14.8\% relatively).


\subsection{Visualizations}
\label{subsec:visual}
\paragraph{Length Distribution}
\cref{fig:length} shows the length distribution of sub-trajectories split by different methods with or without the two components of SBD. We observe that both loss-only and info-only methods follow similar distributions, where the log length and frequency of sub-trajectories exhibit a normal distribution. Since the info-only method is intrinsically semantically meaningful, this provides indirect evidence that the loss-only method also identifies a semantically meaningful segmentation pattern as the info-only method. Furthermore, when combining loss with external information, the distribution does not significantly differ from the loss-only method. This observation highlights that the predictive loss is the key driver in learning the segmentation pattern, with the external information serving a more supplementary role.

\paragraph{Skill Videos}
We sample one long unsegmented video segmented by our method for each skill in the datasets, such as \texttt{sleep in bed} and \texttt{collect seagrass} (see \cref{fig:video examples}). More segments can be seen in \cref{app: skill videos}.

\section{Related Works}
\label{sec:related}
\paragraph{Learning from Unsegmented Demonstrations} 
Prior work on learning from unsegmented demonstrations has proposed various methods to segment trajectories into sub-trajectories. For instance, some methods~\citep{temporal_vae, skid_raw, compile} utilize a variational autoencoder~\citep{vae} model to generate pairs of skill types and boundaries, $(g_i, t_i)_{i=1}^k$, from a given trajectory. TACO~\citep{taco} adopts a weakly supervised framework that identifies skill boundaries, $(t_1,\ldots t_k)$, based on skill sketches, $(g_1,\ldots g_k)$, by solving a sequence alignment problem. BUDS~\citep{buds} constructs hierarchical task structures of demonstration sequences using a bottom-up strategy, deriving temporal segments through agglomerative clustering of the actions. FAST~\citep{FAST} proposes an approach based on the byte-pair encoding~\citep{bpe} algorithm for segmentation and tokenization of robot action trajectories.

\paragraph{Option Learning}
Option learning~\citep{option} is a framework to learn temporal abstractions (\ie, options or skills) directly from the environment's reward signal, primarily through online reinforcement learning. Recent approaches~\citep{option-critic, option-discovery, flexible-option} rely on policy gradient methods to learn the option policies. DIAYN~\citep{DIAYN} proposes a method to generate a reward function adaptively in the absence of reward signals, resulting in the unsupervised emergence of diverse skills.

\paragraph{Event Segmentation Theory}
In neuroscience, episodic memory is the memory of everyday events consisting of short slices of experience~\citep{episodic_memory}. Event Segmentation Theory (EST)~\citep{EST} offers a theoretical perspective on how the neurocognitive system splits a long flow of memory into short events. According to EST, observers build event models of the current situation to generate predictions of future perceptual input. When errors in predictions arise, an event boundary is perceived, and the event model is reset and rebuilt. Our method employs a pretrained model to predict the agent's action based on past observations and uses the predictive loss as an indicator to detect skill boundaries.

\paragraph{Agents in Minecraft} 
Many agents have been developed to interact with Minecraft environments~\citep{zhao2024optimizing,jiang2025reinforcement}. For short-horizon tasks, methods typically employ behavioral cloning or reinforcement learning, as seen in works like VPT~\citep{vpt}, which annotates a large Minecraft YouTube video dataset with actions and trains the first foundational agent in the domain, and its derivatives~\citep{steve1,groot1,yuan2024pre,cai2023open,li2025jarvis}. For long-horizon, programmatic tasks, large language models (LLMs) are used as planners combined with skill policies~\citep{deps,minedreamer,plan4mc,mp5,jarvis-1,li2024optimus,ui-tars-15-seed}, and some methods~\citep{jarvis-1,gitm,voyager} employ explicit memory mechanisms to enhance the long-horizon capabilities of LLM agents. Additionally, recent advances~\citep{omnijarvis,steve2} have explored using end-to-end vision-language models (VLMs) to directly follow human instructions. Finally, some research~\citep{luban,zhang2023creative} focuses on open-ended creative tasks such as building and decoration.

\section{Conclusion}
\label{sec:conclusion}

In this paper, we propose a novel temporal video segmentation algorithm to address the challenges of open-world skill discovery from unsegmented demonstration videos. To generate a segmented dataset of video clips with independent skills, our algorithm detects the potential skill boundaries based on the predictive loss of a pretrained action-prediction model. 
It does not require any manual annotations and can be employed directly on massive internet gameplay videos. 

\section*{Acknowledgement}

This work is funded in part by the National Science and Technology Major Project 2022ZD0114902.
We thank a grant from CCF-Baidu Open Fund.
{
    \small
    \bibliographystyle{ieeenat_fullname}
    \bibliography{main}
}

\clearpage
\maketitlesupplementary
\appendix
\section{Discussion and Future work}
\label{sec:discussion}

\paragraph{Computational Cost} 
Our method takes approximately 1 minute to process a 5-minute video on an NVIDIA RTX 3090Ti, due to the need to predict actions across the entire trajectory. Since data preprocessing is a one-time cost, spending only 1/20 of the video duration with 4 GPUs is acceptable. Also, it is more efficient than those methods relying on human or VLM annotations while ensuring quality.

\paragraph{Failure Case Due to Occasional Assumption Violation}
Our method is occasionally unstable in action-intensive scenes like combat, where performance drops on task \texttt{combat spiders} (29\% to 20\%). See \cref{app: skill videos} for visualization of this failure case. This is likely due to a violation of \cref{assumption: confidence} (Skill Confidence), as indicated by sharp prediction loss fluctuations. However, such issues are not observed elsewhere. Since overly short segments (<~5 frames) comprise less than 10\% of all videos, overall effectiveness remains unaffected.

\paragraph{Choice of Loss Term}
Currently our method detects changes in $P(a_t|o_{1:t})$. A possible alternative is to use $P(o_{t+1}|o_{1:t})$. We don't use it because it does not directly reflect skill changes as $P(a_t|o_{1:t})$ does, and current models~\citep{minedreamer} for predicting future observations suffer from hallucinations. We leave the attempt of this alternative to future work.

\paragraph{Scalability and Broader Applicability}
Since our method is label-free, it has the potential to segment and train on the numerous gameplay trajectories on YouTube. We also anticipate the potential application of our method to other larger datasets and other open-world video games beyond Minecraft.

\section{Proofs}
In this section, we provide a detailed proof of \cref{thm:non-switching and switching}, regarding the bounds of relative predictive probability under scenarios of skill transition and non-transition. We prove the lower bound and upper bound respectively in the following two subsections. 
\label{app: proof}
\subsection{Proof of Upper Bound Under Skill Non-Transition}
\begin{align*}
P(\frac{P(a_{t+1}|o_{1:t+1})}{(\prod_{i=1}^t P(a_{i}|o_{1:i}))^{1/t}}>\frac{(K-1)c}{K}) 
&> P(P(a_{t+1}|o_{1:t+1})>\frac{(K-1)c}{K})
\\&> P(P(\pi_{t+1}=\pi_{t}|o_{1:t+1})\pi_t(a_{t+1}|o_{1:t+1})>\frac{(K-1)c}{K}) && \text{(\cref{eq:transition}})
\\&> P(\pi_t(a_{t+1}|o_{1:t+1})>c) && \text{(\cref{assumption: consistency})}
\\&= P(\pi_{t+1}(a_{t+1}|o_{1:t+1})>c)
\\&> 1-\delta   && \text{(\cref{assumption: confidence})}
\end{align*}

\subsection{Proof of Lower Bound Under Skill Transition}
We first magnify the likelihood ratio between the next action and the average action history,
\begin{align*}
\frac{P(a_{t+1}|o_{1:t+1})}{(\prod_{i=1}^t P(a_{i}|o_{1:i}))^{1/t}}
&< \frac{P(\pi_{t+1}=\pi_t|o_{1:t+1})\pi_t(a_{t+1}|o_{1:t+1}) + P(\pi_{t+1}\ne\pi|o_{1:t+1})}{(\prod_{i=1}^t P(a_{i}|o_{1:i}))^{1/t}} && \text{(\cref{eq:transition})}
\\&< \frac{P(\pi_{t+1}=\pi_t|o_{1:t+1})\pi_t(a_{t+1}|o_{1:t+1}) + P(\pi_{t+1}\ne\pi|o_{1:i})}{\frac{K-1}{K}(\prod_{i=1}^t\pi_{i-1}(a_{i}|o_{1:i}))^{1/t}} && \text{(\cref{eq:transition} and} \\ & && \text{ \cref{assumption: consistency})}
\\&< \frac{\pi_t(a_{t+1}|o_{1:t+1})+\frac{1}{K}}{\frac{K-1}{K}(\prod_{i=1}^t\pi_{i-1}(a_{i}|o_{1:i}))^{1/t}}    && \text{(\cref{assumption: consistency})}
\\&= \frac{K}{K-1}\frac{\pi_t(a_{t+1}|o_{1:t+1})}{\prod_{i=1}^t\pi_{t}(a_{i}|o_{1:i}))^{1/t}} + \frac{1}{(K-1)\prod_{i=1}^t\pi_{i}(a_{i}|o_{1:i}))^{1/t}}
\\&< \frac{Km}{2(K-1)} + \frac{1}{(K-1)\prod_{i=1}^t\pi_{i}(a_{i}|o_{1:i}))^{1/t}} && \text{(\cref{assumption: deviance})}
\end{align*}
Therefore,
\begin{align*}
 &P(\frac{P(a_{t+1}|o_{1:t+1})}{(\prod_{i=1}^t P(a_{i}|o_{1:i}))^{1/t}} < \frac{Km}{2(K-1)} + \frac{1}{c(K-1)})
\\&> P(\frac{Km}{2(K-1)} + \frac{1}{(K-1)\prod_{i=1}^t\pi_{i}(a_{i}|o_{1:i}))^{1/t}}<\frac{Km}{2(K-1)} + \frac{1}{c(K-1)})
\\&= P(\prod_{i=1}^t\pi_{i}(a_{i}|o_{1:i}))^{1/t} > c)
\\&> \prod_{i=1}^{t}P(\pi_{i}(a_{i}|o_{1:i}) > c)
\\&> (1-\delta)^t > 1-t\delta   && \text{(\cref{assumption: confidence})}
\end{align*}

\section{Minecraft Environment}
\label{app:minecraft_env}
Minecraft is an extremely popular sandbox game that allows players to freely create and explore their
world. This game has infinite freedom, allowing players to change the world and ecosystems through
building, mining, planting, combating, and other methods. It is precisely because
of this freedom that Minecraft becomes an excellent AI testing benchmark. In this
game, AI agents need to face situations that are highly similar to the real world, making judgments
and decisions to deal with various environments and problems. By using Minecraft, AI researchers can more
conveniently simulate various complex and diverse environments and tasks, thereby improving the
practical value and application of AI technology.

Our Minecraft environment is a hybrid between MineRL~\citep{minerl} and the MCP-Reborn (\href{https://github.com/Hexeption/MCP-Reborn}{https://github.com/Hexeption/MCP-Reborn}) Minecraft modding package. Unlike the regular Minecraft game, in which the server (or the "world") always runs at 20Hz while the client’s rendering speed can typically reach 60-100Hz, the frame rate is fixed at 20 fps for the client in our experiments. The action and
observation spaces in our environment are identical to what a human player can operate and observe on their device when
playing the game. These details will be further explained in subsequent subsections.

\subsection{Minecraft Game World Setting}
We choose Minecraft version 1.16.5’s survival mode as our experiment platform. In this mode, the
agent may encounter situations that result in its death, such as being burned by lava or a campfire, getting killed by hostile
mobs, or falling from great heights. When this happens, the agent will lose all its items and respawn at a random location
near its initial spawn point within the same Minecraft world or at the last spot it attempted to sleep. Importantly, even after
dying, the agent retains knowledge of its previous deaths and can adjust its actions accordingly since there is no masking of
policy state upon respawn.

\subsection{Observation Space}
The environmental observation space consists of two parts. The first part is the raw pixels from the
Minecraft game that players would see, including overlays such as the hotbar, health indicators, and
animations of a moving hand in response to attack or "use" actions. The rendering resolution of Minecraft is 640x360; however, in our experiments, we resize images to 128x128 for better computational efficiency while maintaining discernibility.
The second part includes auxiliary information about the agent’s current environment, such as its location and weather conditions.
Human players can obtain this information by pressing F3. The specific observation details we include
are shown in \cref{tab:observation_space}. 

\begin{table}[H]
\centering
\resizebox{0.8\linewidth}{!}{
\renewcommand\arraystretch{1.1}
\begin{tabular}{@{}lll@{}}
\toprule
Sources         & Shape       & Description                                                                                                                                                                                                                                                                                                                                \\ \midrule
pov (raw pixel)             & \makecell[c]{(640, 360, 3) \\ to (128,128,3)} & Ego-centric RGB frames.                                                                                                                                                                                                                                                                                                                    \\ \midrule
player\_pos     & (5,)        & The coordinates of (x,y,z), pitch, and yaw of the agent.                                                                                                                                                                                                                                                                                   \\ \midrule
location\_stats & (9,)        & \begin{tabular}[c]{@{}l@{}}The environmental information of the agent's current position, \\ including \texttt{biome\_id}, \texttt{sea\_level}, \texttt{can\_see\_sky}, \texttt{is\_raining} etc.\end{tabular}                                                                                                                                                                 \\ \midrule
inventory       & (36,)       & \begin{tabular}[c]{@{}l@{}}The items in the current inventory of the agent, including \\ the \texttt{type} and corresponding \texttt{quantity} of each item in each slot. \\ If there is no item, it will be displayed as \texttt{air}.\end{tabular}                                                                                                                \\ \midrule
equipped\_items & (6,)        & \begin{tabular}[c]{@{}l@{}}The current equipment of the agent, including \texttt{mainhand}, \texttt{offhand}, \\ \texttt{chest}, \texttt{feet}, \texttt{head}, and \texttt{legs} slots. Each slot contains \texttt{type}, \texttt{damage}, \\ and \texttt{max\_damage} information.\end{tabular}                                                                                                                           \\ \midrule
event\_info     & (5,)        & \begin{tabular}[c]{@{}l@{}}The events that occur in the current step of the game, including \\ \texttt{pick\_up} (picking up items), \texttt{break\_item} (breaking items), \\ \texttt{craft\_item} (crafting items using a crafting table or crafting grid), \\ \texttt{mine\_block} (mining blocks by suitable tools), and \\ \texttt{kill\_entity} (killing game mobs).\end{tabular} \\ \bottomrule
\end{tabular}}
\captionsetup{justification=centering}
\caption{The observation space we use in Minecraft.}
\label{tab:observation_space}
\end{table}

During the actual inference process, the controllers (VPT~\citep{vpt}, GROOT~\citep{groot1}, STEVE-1~\citep{steve1}) only perceive
the raw pixels. The agents (JARVIS-1~\citep{jarvis-1}, OmniJarvis~\citep{omnijarvis}) can access auxiliary information from the environment to generate the text condition of the controller.

\subsection{Action Space}
Our action space includes almost all actions directly available to human players, such as keypresses,
mouse movements, and clicks. It consists of two parts: the mouse and the keyboard. When in-game GUIs are not open, the mouse movement is responsible for changing the player’s camera perspective. When a GUI is open, it moves the cursor. The left and
right buttons are responsible for attacking and using items. The keyboard is mainly responsible for
controlling the agent’s movement. We use the same joint hierarchical action space as VPT~\citep{vpt}, which combines button space and camera space. Button space encodes all combinations of possible keyboard operations (excluding mutually exclusive actions such as \texttt{forward} and \texttt{back}) and a
flag indicating whether the mouse is used, resulting in a total of 8461 candidate actions. The camera
space discretizes the range of one mouse movement into 121 actions. Therefore, the action head of
the agent is a multi-classification network with 8461 dimensions and a multi-classification network
with 121 dimensions. We also filter out frames (both the observation and action) with null action as VPT~\citep{vpt}.

In addition, we abstract the crafting and smelting actions with GUI into functional binary actions,
which are the same as MineDojo (Fan et al., 2022). These advanced actions are only used by the agents (JARVIS-1~\citep{jarvis-1}, OmniJarvis~\citep{omnijarvis}). The detailed action space is described in \cref{tab:action_space}.

\begin{table}[H]
\centering
\resizebox{0.8\linewidth}{!}{
\renewcommand\arraystretch{1.1}
\begin{tabular}{@{}cccl@{}}
\toprule
\textbf{Index} & \textbf{Action}  & \textbf{Human Action} & \textbf{Description}                                                                                                                                                                                                                                                                                             \\ \midrule
1              & Forward          & key W                 & Move forward.                                                                                                                                                                                                                                                                                                    \\
2              & Back             & key S                 & Move backward.                                                                                                                                                                                                                                                                                                   \\
3              & Left             & key A                 & Strafe left.                                                                                                                                                                                                                                                                                                     \\
4              & Right            & key D                 & Strafe right.                                                                                                                                                                                                                                                                                                    \\
5              & Jump             & key Space             & Jump. When swimming, keeps the player afloat.                                                                                                                                                                                                                                                                    \\
6              & Sneak            & key left Shift        & Slowly move in the current direction of movement. \\
7              & Sprint           & key left Ctrl         & Move quickly in the direction of the current motion.                                                                                                                                                                                                                                                                 \\
8              & Attack           & left Button     & Destroy blocks (hold down); Attack entity (click once).                                                                                                                                                                                                                                                          \\
9              & Use              & right Button    & Interact with the block that the player is currently looking at.                                                                                                 \\
10             & hotbar.{[}1-9{]} & keys 1 - 9            & Selects the appropriate hotbar item.                                                                                                                                                \\
11             & Yaw              & move Mouse X      & Turning; aiming; camera movement.Ranging from -180 to +180.                                                                                                                                                                                                                                                      \\
12             & Pitch            & move Mouse Y      & Turning; aiming; camera movement.Ranging from -180 to +180.  \\

\midrule

13             & Equip            & -      & Equip the item in the main hand from the inventory.  \\

14             & Craft            & -      & Execute a crafting recipe to obtain a new item.  \\

15             & Smelt            & -      & Execute a smelting recipe to obtain a new item.

\\ \bottomrule
\end{tabular}}
\captionsetup{justification=centering}
\caption{The action space we use in Minecraft.}
\label{tab:action_space}
\end{table}

\section{Experiment Details}
In this section, we provide our experiment details, including benchmark, training details for GROOT and STEVE-1, how we use event-based information as external auxiliary information, and the length pruning algorithm for sub-trajectories.
\subsection{Minecraft Skill Benchmark}
\label{app: benchmark}
MCU~\citep{mcu} is a diverse benchmark that can comprehensively evaluate the mastery of atomic skills by agents in Minecraft. Since our method is an improvement on dataset processing instead of a new model architecture, it cannot help agents learn new skills that they are completely incapable of acquiring previously. Therefore, we choose 10 early game skills from the original benchmark that the original agent can already grasp at a basic level. Besides, we add \texttt{use torch}, which is also a useful skill in Minecraft. We also add \texttt{find and collect wood}, an enhanced version of the skill \texttt{collect wood}, which requires the agent to start from the plains instead of the forest, aiming to test its ability to explore and find trees. 

Details of the 12 skills in our early game benchmark are shown in \cref{table: benchmark}. For each skill, we include the evaluation metric and a brief description of what the skill is. For skills "Sleep" and "Use bow", GROOT performs very well on the original metric, so we design a manual metric that better assesses its actual performance. STEVE-1 can take text as prompts, which are also listed in the table. For some of the skills, it is tricky to find proper text prompts, so we use visual prompts instead.  
\begin{table}[!ht]
\renewcommand{\arraystretch}{1.13}
\centering
\begin{tabularx}{0.99\linewidth}{@{}>{\centering\arraybackslash}m{1in} >{\centering\arraybackslash}m{2in} >{\centering\arraybackslash}m{2in} >{\centering\arraybackslash}m{1in}@{}}
\toprule
\textbf{Skill} & \textbf{Metric} & \textbf{Description}  & \textbf{Text Prompt for STEVE-1} \\ \midrule
Use furnace  & Craft item cooked mutton  & Given a furnace and some mutton and coal, craft a cooked mutton.  & - \\ \hline
Hunt Sheep & Kill entity sheep  & Summon some sheep before the agent, hunt the sheep.  & - \\ \hline
Sleep in bed & \textbf{STEVE-1}: Use item white bed.  \textbf{GROOT}: Sleep in bed properly. & Given a white bed, sleep on it.  & Sleep in bed. \\ \hline
Use torch & Use item torch & Give some torches, use them to light up an area. Time is set at night. & Use a torch to light up an area. \\ \hline
Use boat & Use item birch boat & Given a birch boat, use it to travel on water. The biome is ocean. & - \\ \hline
Use bow & \textbf{STEVE-1}: Use item bow.  \textbf{GROOT}: Use item bow 20\%, Shoot in distance 40\%, take aim 40\% & Given a bow and some arrows, shoot the sheep summoned before the agent. & - \\ \hline
Collect stone & Mine block stone (cobblestone, iron, coal, diamond) & Given an iron pickaxe, collect stone starting from cave. Night vision is enabled. & Collect stone. \\ \hline
Collect seagrass & Mine block seagrass (tall seagrass, kelp) & Given an iron pickaxe, collect seagrass starting from ocean. & -  \\ \hline
Collect wood & Mine block oak (spruce, birch, jungle, acacia) log & Given an iron pickaxe, collect wood starting from \textbf{forest}. & Chop a tree.    \\ \hline
Find and collect wood & Mine block oak (spruce, birch, jungle, acacia) log & Given an iron pickaxe, \textbf{find} and collect wood starting from \textbf{plains}. & Chop a tree. \\ \hline
Collect dirt & Mine block dirt (grass block) & Given an iron pickaxe, collect dirt starting from plains.  & Collect dirt. \\ \hline
Collect grass & Mine block grass (tall grass) & Given an iron pickaxe, collect grass starting from plains.  & Collect grass. \\
\bottomrule
\end{tabularx}
\caption{Details of 12 atomic skills in our Minecraft skill benchmark for testing GROOT and STEVE-1.}
\label{table: benchmark}
\end{table}

\subsection{Training Details}
\label{app: hyper}
The modified hyperparameters for GROOT and STEVE-1 are listed in \cref{tab:hyper}. We adjust parallel GPUs, gradient accumulation batches, and batch sizes to better align with our available computing resources. To speed up convergence without compromising performance, we double the learning rate for GROOT. The total number of frames for STEVE-1 is also modified, as we change the training dataset from a mixed subset of 8.x (house building from scratch), 9.x (house building from random materials), and 10.x (obtaining a diamond pickaxe) to the full 7.x dataset (early game), which better suits our benchmark tasks.~\footnote{OpenAI released five subsets of contractor data: 6.x, 7.x, 8.x, 9.x, and 10.x.}

All models are trained parallelly on four NVIDIA RTX 4090Ti GPUs. We follow the same policy training pipeline as the original paper, except for the modified hyperparameters mentioned above. GROOT is trained for three epochs of our dataset. STEVE-1, however, is only trained for 1.5 epochs of our dataset, because we observe that the model starts to overfit on the dataset if it is trained further.   

\begin{table}[!ht]
\parbox{.5\linewidth}{
    \centering
    \renewcommand{\arraystretch}{1.2}
    \begin{tabular}{@{}cc@{}}
    \toprule
    Hyperparameter & Value  \\ \midrule
    Learning Rate & 0.00004 \\
    Parallel GPUs & 4   \\
    Accumulate Gradient Batches  & 1    \\
    Batch Size & 8  \\
    \bottomrule
    \end{tabular}
}
\parbox{.5\linewidth}{
    \centering
    \renewcommand{\arraystretch}{1.2}
    \begin{tabular}{@{}cc@{}}
    \toprule
    Hyperparameter & Value  \\ \midrule
    Parallel GPUs & 4   \\
    Accumulate Gradient Batches  & 4    \\
    Batch Size & 4  \\
    n\_frames & 100M \\
    \bottomrule
    \end{tabular}
}
\label{tab:hyper}
\caption{Modified hyperparameters for training controllers. 
\textbf{(Left)} GROOT.
\textbf{(Right)} STEVE-1.}
\end{table}

\subsection{Event-Based Information}
\label{app: event}
Within the Minecraft environment, we focus on events in the categories "use item", "mine block", "craft item", and "kill entity", as they reflect the player's primary activities. Multiple events may occur simultaneously and are recorded as a set. Only the final step of a repeating sequence of sets of events is marked as positive. For example, in the sequence: ("use item iron pickaxe", "mine block iron ore"), ("use item iron pickaxe", "mine block iron ore"), ("use item iron pickaxe", "mine block diamond ore"), ("use item torch"), ("use item torch"), the second, third, fifth steps will be marked as positive. Additionally, for any step $t$ that includes a "kill entity" event, we mark $t+16$ as positive instead of $t$, because there will be a short death animation that follows the entity’s death, and we want it to be included in the same segment.

\subsection{Length Pruning Algorithm}
\label{app: prune}
STEVE-1 forces a minimum and maximum length of trajectories in its method, so we need a length pruning algorithm. In our implementation, the length of each trajectory is pruned as follows: If a trajectory is too short, it is merged with subsequent trajectories until it meets the minimum length requirement. If this results in a trajectory exceeding the maximum length, it is truncated, and the remainder forms the beginning of the next trajectory. For instance, given a minimum and maximum length of 15 and 200, sub-trajectories of lengths 12, 12, 6, 196, and 37 are adjusted to 24, 200, and 39. Here, the first two sub-trajectories are merged, while the 6 is combined with 196 but truncated at 200, with the remaining 2 merged into the next trajectory. There might be more sophisticated strategies, but we use this straightforward algorithm for simplicity.

\section{Examples of Skill Videos}
We sample one video segmented by our method for each skill in the benchmark and an extra video showing a failure case, presenting each video with five screenshots. The first and last screenshots correspond to the first and last frames of the video, while the other three are manually selected to best illustrate the progression of the skill.
\label{app: skill videos}
\begin{itemize}
    \item smelt food 
        \begin{center}
        \includegraphics[width=0.99\linewidth]{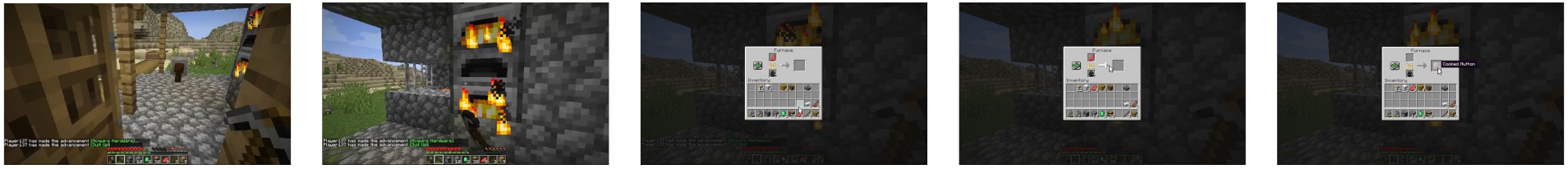}
        \end{center}
    \item hunt sheep
        \begin{center}
        \includegraphics[width=0.99\linewidth]{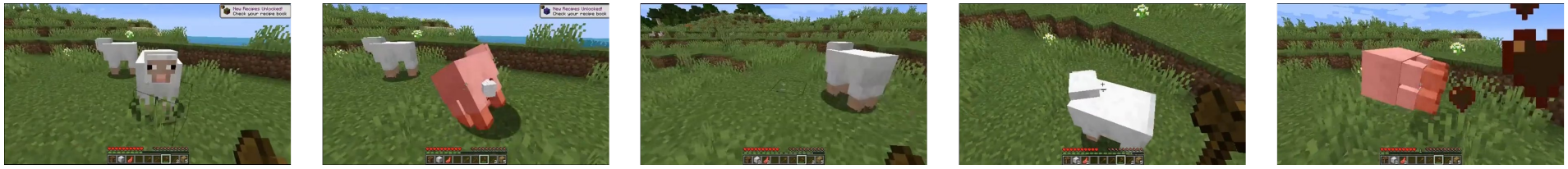}
        \end{center}
    \item sleep
        \begin{center}
        \includegraphics[width=0.99\linewidth]{figures/skills/sleep.pdf}
        \end{center}
    \item use torch
        \begin{center}
        \includegraphics[width=0.99\linewidth]{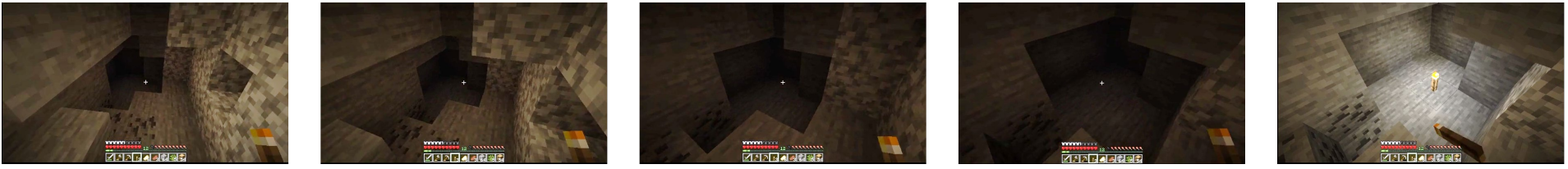}
        \end{center}
    \item use boat
        \begin{center}
        \includegraphics[width=0.99\linewidth]{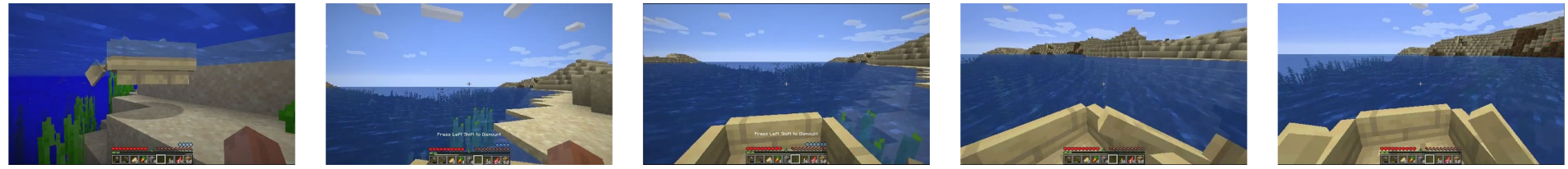}
        \end{center}
    \item use bow
        \begin{center}
        \includegraphics[width=0.99\linewidth]{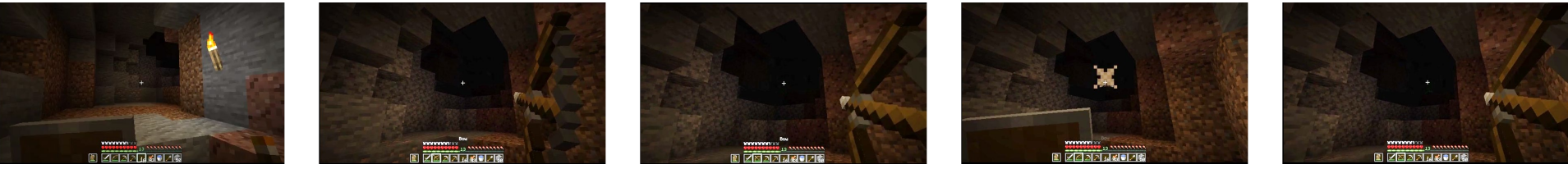}
        \end{center}
    \item collect stone
        \begin{center}
        \includegraphics[width=0.99\linewidth]{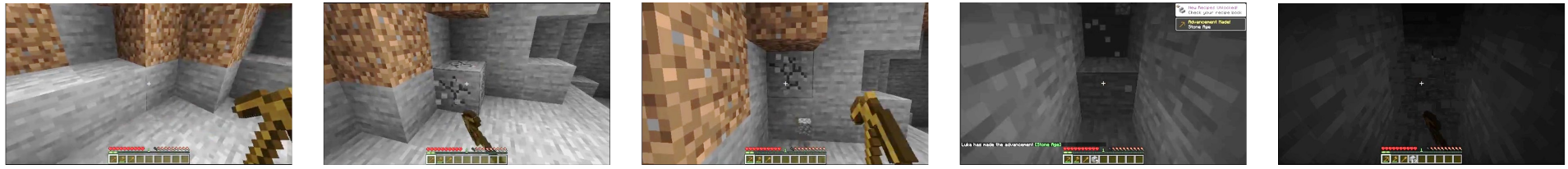}
        \end{center}
    \item collect seagrass
        \begin{center}
        \includegraphics[width=0.99\linewidth]{figures/skills/collect_seagrass.pdf}
        \end{center}
    \item collect wood
        \begin{center}
        \includegraphics[width=0.99\linewidth]{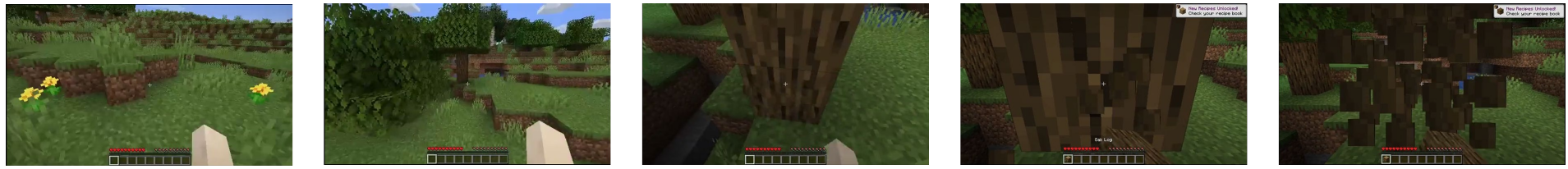}
        \end{center}
    \item collect dirt
        \begin{center}
        \includegraphics[width=0.99\linewidth]{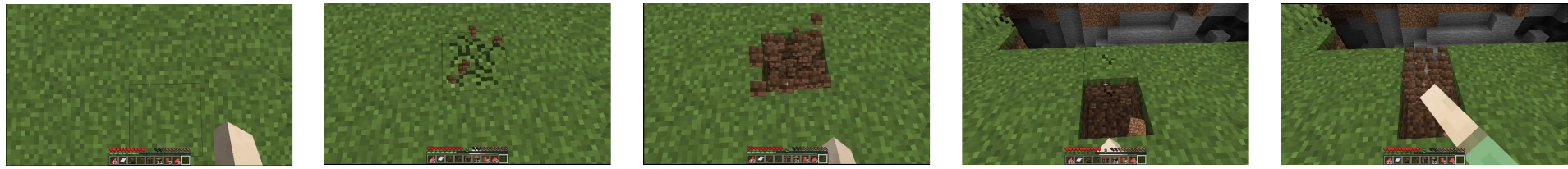}
        \end{center}
    \item collect grass
        \begin{center}
        \includegraphics[width=0.99\linewidth]{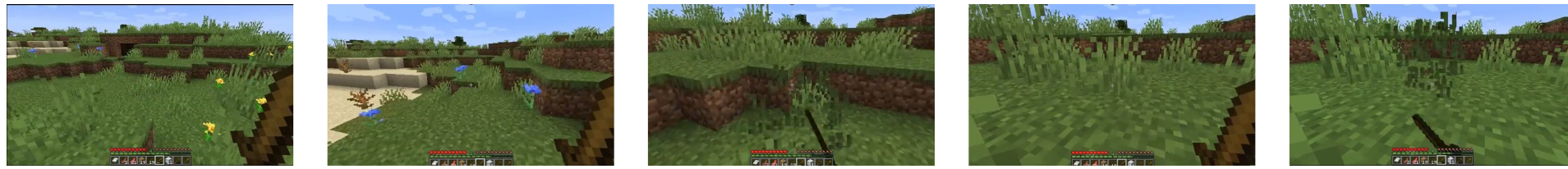}
        \end{center}
    \item combat spider (failure case: overly short)
        \begin{center}
        \includegraphics[width=0.99\linewidth]{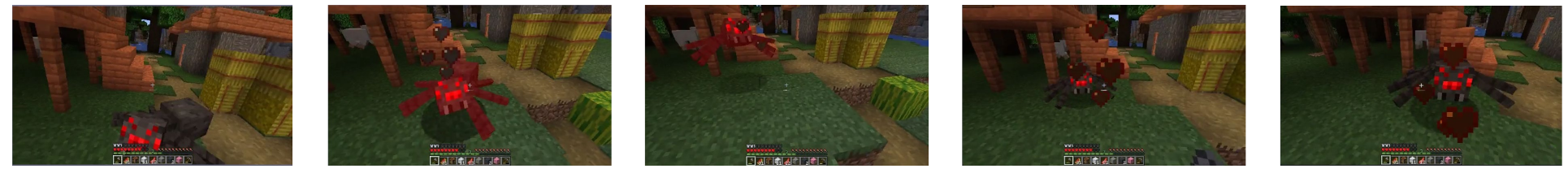}
        \end{center}
\end{itemize}

\end{document}